\newcommand\algorithmicprocedure{\textbf{Procedure}}
\newcommand{\algorithmicendprocedure}{\algorithmicend\ \algorithmicprocedure}
\newcommand\PROCEDURE[3][default]{%
	\ALC@it
	\algorithmicprocedure\ \textsc{#2}(#3)%
	\ALC@com{#1}%
	\begin{ALC@prc}%
	}
	\newcommand\ENDPROCEDURE{%
	\end{ALC@prc}%
	\ifthenelse{\boolean{ALC@noend}}{}{%
		\ALC@it\algorithmicendprocedure
	}%
}
\newenvironment{ALC@prc}{\begin{ALC@g}}{\end{ALC@g}}
\newcommand{\martin}[1]{\todo[inline, color=green!20]{{\bf M:} #1}}
\newcommand{\cristian}[1]{\todo[inline, color=red!20]{{\bf C:} #1}}
\newcommand{\rodrigo}[1]{\todo[inline, color=yellow!20]{{\bf R:} #1}}
\newcommand{\true}{\textbf{true}}
\newcommand{\false}{\textbf{false}}
\newcommand{\galgo}{\Lambda}
\newcommand{\domins}{\mathcal{D}}
\newcommand{\sem}[1]{{\llbracket{}{#1}\rrbracket}}
\newcommand{\IneqF}{\mathcal{F}}
\newcommand{\cA}{\mathcal{A}}
\newcommand{\cL}{\mathcal{L}}
\newcommand{\cR}{\mathcal{R}}
\newcommand{\cC}{\mathcal{C}}
\newcommand{\bbN}{\mathbb{N}}
\newcommand{\bbZ}{\mathbb{Z}}
\newcommand{\bbQ}{\mathbb{Q}}
\newcommand{\bbQinf}{\mathbb{Q}_{\infty}}
\newcommand{\size}[1]{\operatorname{size}(#1)}
\newcommand{\bigo}{O}
\newcommand{\rep}{\mathfrak{r}}
\newcommand{\membOracle}[1]{\operatorname{MQ}(#1)}
\newcommand{\membOracleName}{\operatorname{MQ}}
\newcommand{\equivOracleName}{\operatorname{EQ}}
\newcommand{\SBtree}{T_{\operatorname{SB}}}
\newcommand{\bounds}{\operatorname{bounds}}
\newcommand{\unorderInt}[1]{\lfloor#1\rceil}
\newcommand{\lbound}[1]{\operatorname{lbound}(#1)}
\newcommand{\rbound}[1]{\operatorname{rbound}(#1)}
\newcommand{\leftChild}[1]{\operatorname{left}\!\left(#1\right)}
\newcommand{\rightChild}[1]{\operatorname{right}\!\left(#1\right)}
\newcommand{\parent}[1]{\operatorname{parent}\left(#1\right)}
\newcommand{\SBenc}[1]{\texttt{enc}_{\operatorname{SB}}\left(#1\right)}
\newcommand{\constructRepresentationName}{\operatorname{ConstructRepresentation}}
\newcommand{\constructRepresentation}[1]{\operatorname{ConstructRepresentation}(#1)}
\newcommand{\findClosestAncestor}[2]{\operatorname{FindClosestAncestor}(#1, #2)}
\newcommand{\findBreakLinkName}{\operatorname{FindBreakLink}}
\newcommand{\findBreakLink}[3]{\operatorname{FindBreakLink}(#1, #2, #3)}
\newcommand{\searchLeftName}{\operatorname{SearchLeft}}
\newcommand{\searchLeft}[3]{\operatorname{SearchLeft}(#1, #2, #3)}
\newcommand{\searchRightName}{\operatorname{SearchRight}}
\newcommand{\searchRight}[3]{\operatorname{SearchRight}(#1, #2, #3)}
\newcommand{\bsLeftName}{\operatorname{BSLeft}}
\newcommand{\bsLeft}[4]{\operatorname{BSLeft}(#1, #2, #3, #4)}
\newcommand{\bsRightName}{\operatorname{BSRight}}
\newcommand{\bsRight}[4]{\operatorname{BSRight}(#1, #2, #3, #4)}
\newtheorem{theorem}{Theorem}
\newtheorem{lemma}{Lemma}
\newtheorem{corollary}{Corollary}
\newtheorem{proposition}{Proposition}
\title{Active Learning of Symbolic Automata Over Rational Numbers}
\author{
    Sebastian Hagedorn\textsuperscript{\rm 1},
    Martín Muñoz\textsuperscript{\rm 1, \rm 2, \rm 3},
    Cristián Riveros\textsuperscript{\rm 1, \rm 2},\\
    Rodrigo Toro Icarte\textsuperscript{\rm 1, \rm 4}
}
\begin{document}
	
	\maketitle
	
\begin{center}
\small
    \textsuperscript{\rm 1}Department of Computer Science, Pontificia Universidad Católica de Chile, Santiago, Chile\\
    \textsuperscript{\rm 2}Instituto Milenio Fundamentos de los Datos IMFD, Santiago, Chile\\
    \textsuperscript{\rm 3}Univ. Artois, CNRS, UMR 8188, Centre de Recherche en Informatique de Lens
(CRIL), F-62300 Lens, France\\
    \textsuperscript{\rm 4}Centro Nacional de Inteligencia Artificial CENIA, Santiago, Chile
\end{center}

\begin{abstract}
Automata learning has many applications in artificial intelligence and software engineering. Central to these applications is the $L^*$ algorithm, introduced by Angluin \cite{angluin-learning-L*}. The $L^*$ algorithm learns deterministic finite-state automata (DFAs) in polynomial time when provided with a minimally adequate teacher. Unfortunately, the $L^*$ algorithm can only learn DFAs over finite alphabets, which limits its applicability. In this paper, we extend $L^*$ to learn symbolic automata whose transitions use predicates over rational numbers, i.e., over infinite and dense alphabets.
Our result makes the $L^*$ algorithm applicable to new settings like (real) RGX, and time series. 
Furthermore, our proposed algorithm is optimal in the sense that it asks a number of queries to the teacher that is at most linear with respect to the number of transitions, and to the representation size of the predicates. 
\end{abstract}

	\section{Introduction}\label{sec:introduction}

Automata learning has powered numerous advances in artificial intelligence and software engineering. Its applications range from interpretable sequence classification \cite[e.g.,][]{shvo2021interpretable, katzouris2022learning, roy2023learning} to reinforcement learning \cite[e.g.,][]{hasanbeig2021deepsynth, corazza2022reinforcement, toro2023learning}. Key to these advances is the $L^*$ algorithm \cite{angluin-learning-L*}.

Learning the smallest automaton consistent with a fixed set of examples is a well-known NP-hard problem \cite{GOLD1967447, ANGLUIN1980117}. However, Angluin \cite{angluin-learning-L*} showed that minimal automata can be learned in polynomial time under an \emph{active learning} framework. In this setting, the learner interacts with a \emph{minimally adequate teacher (MAT)}, which can answer two types of queries: \emph{membership queries}, which return the correct classification of a given string, and \emph{equivalence queries}, which verify whether a proposed automaton correctly represents the target language. If the hypothesis is incorrect, the teacher provides a counterexample. Under these assumptions, the $L^*$ algorithm can efficiently learn a minimal DFA in polynomial time, assuming constant-time responses to both query types.

The $L^*$ algorithm has had a significant and lasting influence on the field of automata learning, shaping both theoretical developments and practical applications \cite{vaandrager2022new}. However, it is limited to learning \emph{deterministic finite automata (DFA)}, which can only capture temporal behaviors over finite alphabets. This limitation restricts their use in domains involving infinite or dense input spaces, such as complex event recognition \cite{giatrakos2020complex} and time series learning \cite{hamilton2020time}.%

To address these limitations, van Noord and Gerdemann \cite{sfa-van-noord} and and Veanes et al. \cite{veanes2012symbolic} introduced the concept of symbolic automata, which generalize classical automata by allowing transitions to be labeled with predicates over rich alphabet theories, while preserving many of the desirable properties of traditional automata. The learnability and practical applications of symbolic automata have been extensively explored \cite{learning-symbolic-automata, Mens_2015, 10.1145/138027.138042}. However, existing approaches for learning symbolic automata assume that the MAT provides lexicographically minimal counterexamples. Without this assumption, it remains an open question whether such automata can be learned efficiently.

In this paper, we extend the $L^*$ algorithm to enable the learning of symbolic automata over the rational numbers. Our approach can learn any symbolic automaton whose predicates operate over a single rational-valued variable, without imposing any constraints on the form of counterexamples provided by the teacher. Furthermore, our method is query-efficient, requiring only a polynomial number of queries relative to the size of the target automaton. To achieve this, we first introduce a technique for learning \emph{finite piecewise functions} over the rational numbers using a MAT. This is accomplished through a binary search procedure over a number-theoretic structure known as the Stern–Brocot tree \cite{Stern1858,Brocot1861}. Interestingly, our learning algorithm is efficient in the sense that it needs a linear number of queries concerning the function size and does not depend on the worst-case size counterexample. We then integrate this interval learning strategy into the $L^*$ framework to support the learning of symbolic automata over rational numbers. Finally, we formally prove that our method can learn any such automaton in polynomial time under standard assumptions.

	\section{Preliminaries}\label{sec:learning-setting}

We begin by recalling the learning setting that we will use throughout the paper. We present the general definition here to later instantiate it for finite piecewise functions in Section~\ref{sec:setting} and for symbolic automata in Section~\ref{sec:learning-symbolic-automata}.

\paragraph{Learning domain.} Let $\domins$ be an infinite domain (e.g., sequences) and $\Sigma$ a finite domain (e.g., $\{0,1\}$). A \emph{concept} over $\domins$ is a function $\gamma: \domins \mapsto \Sigma$. A representation $\rep$ over $\domins$ is a finite object (i.e., given by some syntax and semantics) that encodes a function $\rep: \domins \mapsto \Sigma$. We denote by $\size{\rep}$ the size of $\rep$, namely, the number of bits to store $\rep$. We say that $\rep$ is a representation of a concept $\gamma$, denoted by $\rep = \gamma$, if $\rep(d) = \gamma(d)$ for every $d \in \domins$. We usually use $\cC$ and $\cR$ to denote a \emph{class of concepts} and a \emph{class of representations}, respectively. 
We assume that every concept $\cC$ has some representation in $\cR$, namely, for every $\gamma \in \cC$ there exists $\rep \in \cR$ such that $\rep = \gamma$.
Finally, we denote by $\size{\gamma} = \min_{\rep: \rep = \gamma} \size{\rep}$ the size of the minimum representation in $\cR$ of $\gamma \in \cC$.
\cristian{TODO: utilizer desde ahora los comandos $\true$ y $\false$ para los valores true y false.}

\paragraph{The MAT learning setting.}  Let $\cC$ and $\cR$ be a class of concepts and representations, respectively, over $\domins$. 
In general, given some concept $\gamma \in \cC$, the learning task consists of finding some $\rep \in \cR$ such that $\rep = \gamma$. For this task, we follow the active learning setting introduced by \cite{angluin-learning-L*} called \emph{Minimally Adequate Teacher} (MAT): 
the Learner has to infer the behavior of an unknown concept $\gamma\in \cC$ by having access to two functions, called \emph{membership} and \emph{equivalence oracles}, provided by the Teacher who knows $\gamma$ beforehand.
Specifically, a \emph{membership oracle} $\membOracleName$ of $\gamma$ answers the following query: 
given an input $d \in \domins$, return the output $\gamma(d)$. An \emph{equivalence oracle} $\equivOracleName$ of $\gamma$ receives a representation $\rep \in \cR$ and returns either the pair $(\true, \bot)$, meaning that $\rep = \gamma$, or the pair $(\false, d^*)$ where $d^*\in \domins$, meaning that $\rep \not= \gamma$ and $d^*$ is a \emph{counterexample} (i.e., $\rep(d^*) \neq \gamma(d^*)$). 
For any pair of functions $f,g:\bbN \rightarrow \bbN$, we say that an algorithm \emph{$\galgo$ learns $\cR$ using a MAT with $f(n)$ membership queries and $g(n)$ equivalence queries} if, given a membership oracle $\membOracleName$ and an equivalence oracle $\equivOracleName$ of some unknown concept $\gamma \in \cC$, $\galgo$ finds a representation $\rep \in \cR$ such that $\rep = \gamma$ after making at most $\bigo(f(n))$ membership queries and $\bigo(g(n))$ equivalence queries with $\size{\gamma} = n$.
As it is common in the literature, for measuring the efficiency of $\galgo$, we are interested in bounding the number of calls that $\galgo$ makes to the oracles with respect to $\size{\gamma}$ and, also, to the length of the longest counterexample provided by~$\equivOracleName$.

	\section{Learning finite piecewise functions}\label{sec:setting}

We address the problem of learning finite piecewise functions over the rational numbers. This learning problem presents a fundamental challenge: the learner must not only generalize from observed examples, but also precisely identify the separation boundaries that determine membership of an example. In contrast to discrete domains, where classification changes occur at clearly defined points, the density and order structure of the rational numbers require the learner to infer the exact transition thresholds at which the classification shifts to a different value.
Here, we present a foundational result at the core of this problem: how to efficiently classify a rational into one of multiple class labels.

We start by introducing some basic notation regarding rationals and intervals that we will use throughout the paper to then define the learning problem. We end this section by presenting the main technical result of the paper.

\paragraph{Rationals and intervals} Let $\bbN = \{0,1,\ldots\}$ be the set of natural numbers, $\bbZ$ the set of integers, $\bbQ$ the set of rational numbers, and $\bbQinf = \bbQ \cup \{\infty, -\infty\}$ where $-\infty < q < \infty$ for every $q \in \bbQ$.
We will use $a,b,c,\ldots$ to denote elements in $\bbN$ or $\bbZ$, and $p,q,r, \ldots$ for elements in $\bbQ$.
Further, we represent any $q \in \bbQinf$ as a {\em fraction} $q = \frac{a}{b}$ where $a \in \bbZ$ and $b \in \bbN$. We note that this includes the fractions $\frac{a}{0} = \infty$  and $\frac{-a}{0}= -\infty$ for any $a\in\bbN$. We say that $\frac{a}{b}$ is {\em irreducible} if there is no natural number $c > 1$ such that $c$ divides both $a$ and $b$.
In the sequel, we will always assume that $\frac{a}{b}$ is irreducible, where $\infty = \frac{1}{0}$ and $-\infty = \frac{-1}{0}$.
For $q = \frac{a}{b}\in\bbQ$ with $a \neq 0$, we define $\size{q} = \log(|a|) + \log(b)$ where $\log(\cdot)$ is in base~$2$, namely, the number of bits to represent $q$ as a fraction; and when $a = 0$, $\size{q} = 1$.

A subset $I \subseteq \bbQ$ is an \emph{interval} if $I$ is non-empty, and for every $p, q \in I$ and $r \in \bbQ$, if $p < r < q$, then $r \in I$.
As usual, we represent intervals as pairs $\langle p, q \rangle$ where $\langle$ could be a left-bracket $[$ (i.e., left-closed) or a left-parenthesis $($ (i.e., left-open), and $\rangle$ could be a right-bracket $]$ (i.e., right-closed) or a right-parenthesis $)$ (i.e., right-open). In addition, when $p= -\infty$ then $\langle\, = ($, and when $q = \infty$, then $\rangle = \, )$ (i.e., $-\infty$ or $\infty$ are never included). Further, we always write these $p$ and $q$ as irreducible fractions in $\bbQinf$. For example:
\[
\begin{array}{rcl}
	(\frac{-1}{2}, \frac{3}{1}] & = & \{p \in \bbQ \mid -\frac{1}{2} < p \leq 3\} \text{ and } \vspace{1mm}\\
	\left[\frac{7}{1}, \frac{1}{0}\right) & = & \{ p \in \bbQ \mid 7 \leq p < \infty \} 
\end{array}
\]
are intervals over $\bbQ$ under this notation. 
For $I = \langle p, q \rangle$ we define $\size{I} = \size{p} + \size{q}$.

For two intervals $I_1, I_2$, we write $I_1 < I_2$ iff $p < q$ for every $p \in I_1$ and $q \in I_2$.
We say that a sequence of intervals $I_1 I_2 \ldots I_k$ is an \emph{interval partition} of $\bbQ$ iff $I_1 < I_2 < \ldots < I_k$ and $\bbQ = \bigcup_{j=1}^k I_j$.
For instance,
\[
\big(\tfrac{-1}{0}, \tfrac{-2}{3}\big)\, \big[\tfrac{-2}{3}, \tfrac{1}{2}\big]\, \big(\tfrac{1}{2}, \tfrac{3}{2}\big]\, \big(\tfrac{3}{2}, \tfrac{1}{0}\big) 
\]
is an interval partition of $\bbQ$.

\paragraph{Finite piecewise functions.} Let $\Sigma$ be a finite set of labels. We will use $A,B,\ldots$ for denoting labels in $\Sigma$. 
The class $\cC$ of \emph{concepts} are all functions over the rationals $\gamma: \bbQ \mapsto \Sigma$. Further, we restrict ourselves to the class of \emph{finite piecewise functions}, namely, functions $\gamma$ that can be represented by a finite number of intervals.
Formally, we say that an interval $I$ is \emph{monochromatic} with respect to $\gamma$ if $\gamma(p) = \gamma(q)$ for every pair $p, q \in I$.
If $I$ is monochromatic with respect to $\gamma$, we define $\gamma(I) = \gamma(q)$ where $q$ is any element in  $I$. 
As an example, consider a function $\gamma_1:\bbQ\to\{A,B\}$:
\[
\gamma_1(q) =
\begin{cases}
A, & \text{ if } \tfrac{-2}{3} \leq q \leq \tfrac{1}{2} \text{ or } \tfrac{3}{2} < q,\\
B, & \text{ otherwise.}
\end{cases}
\]
One can check that interval $(\tfrac{1}{2}, \tfrac{3}{2}]$ is monochromatic with respect to $\gamma_1$, and that $\gamma_1((\tfrac{1}{2}, \tfrac{3}{2}]) = B$.

\cristian{NOTAR el cambio de nombre a "finite piecewise functions" que creo que es más correcto. TODO: Actualizar el resto del paper.}

We say that $\gamma$ is a \emph{finite piecewise function} if there exists an interval partition $I_1 I_2 \ldots I_k$ of $\bbQ$ such that each $I_j$ is monochromatic with respect to $\gamma$. If this is the case, we can represent $\gamma$ as the sequence of pairs:
\[
\rep = (I_1, \gamma(I_1)) \,(I_2, \gamma(I_2)) \,\ldots\, (I_k, \gamma(I_k))\,. \tag{$\dagger$}
\]
We will write $\rep = \gamma$ to denote that $\rep$ represents $\gamma$ and write $\rep(q)$ to denote the label $\gamma(q)$ for every $q \in \bbQ$. 
We define the class $\cR$ of all \emph{representations} $\rep$ like ($\dagger$) of finite piecewise functions. 
We define the size of $\rep$ as $\size{\rep} = \sum_{j} \size{I_j}$. 

Note that a representation of a finite piecewise function $\gamma$ is not unique. For example, $((\frac{-1}{0}, \frac{1}{1}), A) ([\frac{1}{1}, \frac{1}{0}), A)$ and $((\frac{-1}{0},\frac{1}{0}), A)$ represent the same concept $\gamma$ where $\gamma(q) = A$ for every $q \in \bbQ$. For this reason, we say that a representation like $(\dagger)$ is the \emph{canonical representation} of $\gamma$ if, in addition, $\gamma(I_j) \neq \gamma(I_{j+1})$ for every $j < k$. One can easily check that every concept $\gamma$ has a unique canonical representation. Finally, for any concept $\gamma$ with canonical representation $\rep$ we define $\size{\gamma} = \size{\rep}$. 
Continuing the example, $\gamma_1$ can be seen to be a finite piecewise function. Its canonical representation is shown in Figure~\ref{fig:stern-brocot-tree}.

\paragraph{Main technical result.}
Herein lies the problem of learning an unknown finite piecewise function $\gamma \in \cC$ that can be finitely represented by some $\rep \in \cR$. In this work, we present an MAT learning algorithm that does this efficiently, focusing on finding each labeled interval $(I, \gamma(I))$ in time $\bigo(\size{I})$.  
Specifically, we get the following technical result on learning finite piecewise functions.
\begin{theorem}\label{theo:learning-intervals}
	For the class $\cC$ of finite piecewise functions, there exists a learning algorithm that learns an unknown concept $\gamma \in \cC$ using MAT with a linear number of membership and equivalence queries over $\size{\gamma}$.
\end{theorem}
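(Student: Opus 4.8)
The learner needs to discover the canonical representation $\rep = (I_1, \gamma(I_1)) \cdots (I_k, \gamma(I_k))$ of the unknown finite piecewise function $\gamma$, using membership and equivalence queries. The key quantity to bound is the total number of queries, which must be $O(\size{\gamma}) = O(\sum_j \size{I_j})$ where each $\size{I_j} = \size{p_j} + \size{q_j}$ measures the bit-size of the endpoint fractions. The natural strategy is to discover the boundary points one at a time, from left to right, where each discovered boundary is a fraction and the cost of finding it should be proportional to its bit-size.

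Let me sketch the approach. The learner maintains a hypothesis that is a partial interval partition: it has correctly identified a prefix $(I_1,\gamma(I_1)) \cdots (I_{j-1},\gamma(I_{j-1}))$ together with a guess for the current interval $I_j$ extending to $+\infty$. It submits this hypothesis via an equivalence query. If the answer is $(\true,\bot)$ we are done. Otherwise we get a counterexample $d^*$; by comparing $\gamma(d^*)$ (obtained through a membership query, or already known to be the label the hypothesis assigned) to the hypothesis, the learner learns that the current guess for the right endpoint of $I_j$ is wrong — either too far left or too far right — and, crucially, learns a concrete rational that lies strictly on one side of the true boundary. The heart of the algorithm is then a search procedure that, given one rational known to lie in the already-settled region and one rational known to lie strictly past the true boundary of $I_j$, pins down the exact boundary fraction. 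This is where the Stern–Brocot tree enters: rather than a naive binary search on values (which would cost a number of queries proportional to the magnitude of the counterexample, not its bit-size), we perform a search that descends the Stern–Brocot tree, which enumerates all positive rationals as an infinite binary search tree ordered by value, with the property that the fraction $\frac{a}{b}$ sits at depth $\Theta(\size{\frac{a}{b}})$. Each step of the descent is resolved by a single membership query (asking which side of the candidate midpoint-fraction the true boundary lies on); since we also do not know in advance how deep the target boundary sits, we interleave an exponential-search phase (doubling a depth bound) with the Stern–Brocot descent, so that the cost to locate a boundary fraction $\frac{a}{b}$ is $O(\size{\frac{a}{b}})$ membership queries. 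Once the boundary of $I_j$ is found, together with the label of $I_{j+1}$ — which is read off from a membership query just past the boundary — the learner extends its correct prefix and repeats.

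The correctness argument is then straightforward: each equivalence query either terminates with the correct answer or strictly increases the length of the correctly-identified prefix of the canonical representation by one interval, so after at most $k$ equivalence queries the hypothesis equals $\gamma$. For the query bound, each of the $k$ equivalence-query rounds triggers one Stern–Brocot search that costs $O(\size{I_j})$ membership queries, plus $O(1)$ additional membership queries to read off labels; summing over $j$ gives $O(\sum_j \size{I_j}) = O(\size{\gamma})$ membership queries and $O(k) = O(\size{\gamma})$ equivalence queries, as claimed. A subtlety worth spelling out: the counterexample $d^*$ returned by the teacher may be an arbitrarily large (long) fraction, far from the true boundary, and we must ensure the search cost depends only on the size of the boundary we are trying to find and not on $\size{d^*}$; the exponential-search-then-Stern–Brocot structure handles exactly this, because we never binary-search in the space of values between the boundary and $d^*$ — we only ever refine our estimate of the boundary itself by walking down the tree, using $d^*$ merely as a witness of direction.

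The main obstacle I anticipate is making precise the interface between a counterexample and the search procedure, and proving that a single Stern–Brocot descent, with the exponential depth-doubling wrapper, really does isolate the correct boundary fraction in $O(\size{I_j})$ queries regardless of where the counterexample lands — in particular, handling the left-open versus left-closed (and right-open versus right-closed) distinction at the boundary correctly, since membership queries reveal labels of individual rationals but a boundary fraction's own label determines which of the two adjacent intervals it belongs to. Getting the bracket/parenthesis bookkeeping right, and arguing that it costs only $O(1)$ extra queries per boundary, is the fiddly part; the asymptotic query count itself follows cleanly once the search subroutine is established. The remaining work — organizing the prefix-extension invariant, and verifying that the hypothesis submitted at each round is a syntactically valid element of $\cR$ — is routine.
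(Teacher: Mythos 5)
Your proposal shares the two key tools with the paper---the Stern-Brocot tree, and exponential search so that the cost of a branch search scales with the bit-size of the target rather than its magnitude---but organizes the algorithm around a ``correct prefix'' invariant (the leftmost $j-1$ intervals are known exactly; one equivalence query pins down the next boundary; $O(k)$ rounds total) that the paper does not use and that, I believe, cannot work as stated.

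The gap is in the step ``pins down the exact boundary fraction'' in a single round. Suppose $I_j$ is right-closed with right endpoint $q_j$. A membership query at $e$ only yields $\gamma(e)$, and when $\gamma(e) = \gamma(I_j)$ this does not determine which side of $q_j$ the point $e$ lies on: $e$ could be in $I_j$, or in a later interval $I_{j+2}, I_{j+4}, \ldots$ that happens to carry the same label. You propose to use $d^*$ ``as a witness of direction,'' but the teacher is free to return a $d^*$ several intervals past $q_j$, so $(q_j, d^*)$ can already contain both labels, and comparing $e$ to $d^*$ by value leaves the ambiguity unresolved whenever $e < d^*$. Concretely: let $\gamma = A$ on $(0, 1/2]$, $B$ on $(1/2, 3/4]$, $A$ on $(3/4, \infty)$, and $d^* = 5/8$. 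A Stern-Brocot descent aiming for $q_j = 1/2$ visits $1/2$, sees label $A$, has no way to recognize it has arrived (since $1/2 < d^*$ and the label matches $\gamma(I_j)$), and steps into the subtree strictly to the right of $1/2$; there it sees label $B$ on the chain $2/3, 3/5, 4/7, 5/9, \ldots$ converging to $1/2$ from above, and keeps turning left without terminating. This is not the ``fiddly'' bracket bookkeeping you flag as the obstacle; it is the central one.

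The paper avoids it by never naming $q_j$ directly. It searches for \emph{break links}: edges $(p,q)$ of the Stern-Brocot tree whose closed span $\unorderInt{p,q}$ is not monochromatic. A break link is detectable locally with two membership queries, and each branch search terminates as soon as one is found---in the example above at the very first step, with the edge $(1/2, 2/3)$. The exact boundaries are then recovered not by the search itself but by a separate reconstruction pass (Algorithm~\ref{alg:hip-constructions}) that reads off left- and right-bounds from the ancestor/descendant structure and labels of the collected endpoints; Proposition~\ref{prop:hypothesis-construction-correctness} shows this reconstruction is exactly correct once all break links are in hand. This also changes the accounting: Proposition~\ref{prop:break-links-convergent-equivalence} ties break links to the \emph{convergent fractions} of each boundary, so a boundary $q$ with SB-encoding $\sim\![a_1,\ldots,a_n]$ gives rise to up to $n$ break links, each discovered in a separate equivalence-query round at a cost of $O(\log(a_i))$ membership queries, and Proposition~\ref{prop:continued-fraction-representation} gives $\sum_i \log(a_i) \in O(\size{q})$. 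Both totals therefore come to $O(\size{\gamma})$---but the equivalence-query count is $O(\size{\gamma})$, not the $O(k)$ your prefix argument asserts, and you would need a genuinely new idea to collect all $n$ convergents of a boundary within a single equivalence-query round.
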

In other words, no matter the concept $\gamma$, the number of calls to the oracles is proportional to the size of the canonical representation of $\gamma$, where rational numbers are represented in a binary encoding.
Arguably, this MAT learning algorithm is 
\martin{efficient $\to$ optimal?}
optimal in the sense that one cannot take less than $\size{\gamma}$. Furthermore, the learning guarantee does not depend on the size of the longest counterexample as other MAT algorithms do (see Theorem~\ref{theo:learning-automata} in the next section).

The proof of Theorem~\ref{theo:learning-intervals} is technical, and we present the details in Sections~\ref{sec:properties} and~\ref{sec:algorithm}.
In the next section, we show how to apply this result in the context of symbolic automata.

    \section{Learning symbolic automata}\label{sec:learning-symbolic-automata}

Symbolic finite state automata (SFA) are an extension of classical finite state automata in which the transitions are replaced by predicates instead of the symbols from a finite alphabet. They were first introduced in \cite{sfa-van-noord} with natural language processing motivations and have been studied in recent decades because they enable applications over real-life scenarios that usually consider large or even infinite alphabets \cite{applications-symbolic-automata-Veanes, 10.1007/978-3-319-96145-3_23,learning-symbolic-automata}.

This paper addresses the learning of SFAs with one-dimensional predicates on $\bbQ$, built out of inequalities. In this section, we formally define SFAs with inequality formulas. Then, we present some applications of SFAs in practice. After this, we show how to apply Theorem~\ref{theo:learning-intervals} to MAT learning  of SFAs and discuss how it is compared with similar~works.

\paragraph{Inequality formulas.}
Let $x$ be a fixed variable. An \emph{inequality formula} has the following syntax:
\[
\varphi \ :=\ x < q \, \mid \, x > q \, \mid \, x \leq q \, \mid \, x \geq q \ \mid \, \varphi \vee \varphi \, \mid \, \varphi \wedge \varphi
\]
with $q \in \bbQ$. We say that a value $p \in \bbQ$ {\em satisfies} $\varphi$ if one can replace every appearance of $x$ in $\varphi$ by $p$ and obtain a binary formula that evaluates to true. We denote this by $p\models\varphi$. 
An inequality formula $\varphi$ naturally defines a function $\sem{\varphi}: \bbQ \mapsto \{0,1\}$ (also called a \emph{predicate}) such that $\sem{\varphi}(p) = 1$ if, and only if, $p \models \varphi$.
Two inequality formulas $\varphi$ and $\psi$ are {\em equivalent} if $\sem{\varphi} = \sem{\psi}$.  
As an example, consider:
\[
\varphi = 
\left(\tfrac{-2}{3} \leq x \,\wedge\, x \leq \tfrac{1}{3}\right) \ \vee \  \left(\tfrac{3}{2} < x\right)
\]
which defines $\sem{\varphi} = [\frac{-2}{3}, \frac{1}{2}] \cup (\frac{3}{2}, \infty)$ (similar to $\gamma_1$).

We write $\true$ or $\false$ as a shorthand for two fix formulas that are always $1$ or $0$, respectively. Further, for $\sim_1, \sim_2 \ \in \{<, \leq\}$, we write $q_1 \!\sim_1 \!x\! \sim_2\! q_2$ as a shorthand for $q_1 \sim_1 x \wedge x \sim_2 q_2$.
One can easily check that for every interval $I$, there exists a formula $\varphi_I$ that defines it (e.g., if $I = [\frac{-2}{3},\frac{1}{3}]$ then $\varphi_I := \frac{-2}{3} \leq x \leq \frac{1}{3}$). 
We call the rational numbers that appear in $\varphi$ the {\em coefficients of}~$\varphi$.
We define the \emph{size} of $\varphi$, denoted by $\size{\varphi}$, as the sum of the sizes of the coefficients and symbols appearing in $\varphi$.
Finally, we denote the set of all inequality formulas as $\IneqF$.

\paragraph{Symbolic Automata.} We introduce a subset of Symbolic Finite state Automata (SFA) that carry predicates that are represented by inequality formulas. The difference between SFA and classical finite automata is that finite automata have transitions of the form $s_1 \xrightarrow{a} s_2$ with $s_1,s_2$ states and $a$ a symbol in some finite alphabet; whereas SFA have transitions of the form $s_1 \xrightarrow{\varphi} s_2$ with $s_1,s_2$ states and $\varphi$ an inequality formula. Consequently, there is (possibly) an infinite number of values that can satisfy $\varphi$. Hence these type of transitions enable finite state automata to handle an infinite alphabet such as the set of rational numbers. 

\cristian{NOTAR QUE ahora estoy usando $S$ para el conjunto de estados y $s$, $s_1$ y $s_2$, y que $Q$ se parece a los racionales, y usamos $p$, $q$ para racionales.}

Formally, a (non-deterministic) \emph{Symbolic Finite state Automata with inequality predicates} (SFA) is a tuple:
\[
\cA = (S, \Delta, s_{0}, F) \tag{$*$}
\]
where $S$ is a finite set of states, $\Delta \subseteq S \times \IneqF \times S$ is a finite relation, $s_{0} \in S$ is the initial state, and $F \subseteq S$ is the set of final states. 
We say that $\Delta$ is the \emph{transition relation} of $\cA$ and $(s_1, \varphi, s_2) \in \Delta$ is a \emph{transition} of $\cA$ that we usually write as $s_{1} \xrightarrow{\varphi} s_{2}$.
A \emph{run} of $\cA$ over a string $w= q_{1} q_{2} \dots q_{n} \in \bbQ^{*}$ is a sequence of transitions:
\[
\rho \ := \ s_{0} \xrightarrow{\varphi_{1}} s_{1} \xrightarrow{\varphi_{2}} \dots \xrightarrow{\varphi_{n}} s_{n}
\]
such that $(s_{i-1}, \varphi_{i}, s_{i}) \in \Delta$ and $q_{i} \models \varphi_i$ for every $i \in \{1, \ldots, n\}$.
A \emph{run} is accepting if $s_{n} \in F$. The language accepted by $\cA$ is defined as $\cL(\cA) = \{w \in \bbQ^* \mid \cA \text{ accepts } w \}$.
We define $\size{\cA} = |S|+|\Delta| + \sum_{(s_1, \varphi, s_2) \in \Delta} \size{\varphi}$ as the \emph{size} of $\cA$.

An SFA $\cA$ of the form $(*)$ is \emph{deterministic} if, for all pairs of transitions $(s, \varphi, s_1),(s,\psi,s_2) \in \Delta$, if $\sem{\varphi} \cap \sem{\psi} \neq \emptyset$, then $\varphi = \psi$ and $s_1 = s_2$. 
In~\cite{minimization-symbolic-automata}, it was shown that for every SFA there exists an equivalent deterministic SFA of at most exponential size. As it is standard in the MAT learning setting of finite automata~\cite{angluin-learning-L*,learning-symbolic-automata}, in the sequel we assume that all SFAs are deterministic. 

In the following, we present two examples of how to use SFAs in practice that motivate their learning problem.

\paragraph{RGX.} In practice, regular expressions (RGX) process text documents where symbols are encoded in \emph{Unicode}, where each symbol is an integer in the range from $0$ to $1.114.111$ written in hexadecimal as \texttt{U+0000} to \texttt{U+10FFFF}. For instance, the digits $\{0, 1, \ldots, 9\}$ run from $48$ (\texttt{U+0030}) to $57$ (\texttt{U+0039}), the latin alphabet in uppercase $\{A, \ldots, Z\}$ runs from $65$ (\texttt{U+0041}) to $90$ (\texttt{U+005A}), and the latin alphabet in lowercase $\{a, \ldots, z\}$ runs from $97$ (\texttt{U+0061}) to $122$ (\texttt{U+007A}). Therefore, a text document in practice can be seen as a sequence of integers $q_1 \ldots q_n \in \bbN^*$ (note that although SFA and our results used $\bbQ$, they also applied to $\bbN$ given that $\bbN \subseteq \bbQ$).
For a simple example, suppose that we want to check a lowercase letter directly followed by an uppercase letter in the Latin alphabet. This pattern can be defined by a RGX\footnote{Notice that, in practice, RGX are unanchored, namely, they start the evaluation in any part of the document.} (in PCRE or Perl syntax) as:
\[
\texttt{[A-Z]\![a-z]}
\]
where \texttt{[A-Z]} and \texttt{[a-z]} are called \emph{char classes} denoting any symbol from $A$ to $Z$ (i.e. interval $[65,90]$) and from $a$ to $z$ (i.e. interval $[97,122]$). In Figure~\ref{fig:example-sfa-temperature} (1),
we show how to represent this RGX with an SFA by using inequalities. Note that this pattern can be defined with standard finite state automata; however, we will need a large number of transitions between states to encode the inequality formulas.%

\paragraph{Time series.} Consider a scenario where a sensor measures the temperature of the environment. More precisely, consider that the sensor emits a signal in the form of a sequence $q_1 q_2 \ldots q_n \in \bbQ^*$ where each $q_i \in \bbQ$ is a temperature measured in degrees Celsius and where the order represents the time (i.e., $q_1$ was measured before $q_2$, and so on). %

Consider that a user is interested in detecting the following pattern of the sensor signals: 
\[
\tfrac{13}{2} < t_{1} \leq \tfrac{23}{3} \ ; \  t_{2} > \tfrac{13}{1}
\]
which means that it wants to see a temperature $t_1$ in the range $(\frac{13}{2}, \frac{23}{3}]$ followed by a temperature $t_2$ over $13$.
We use~$;$ to denote that two events occur sequentially, but not necessarily consecutively. One can check that the SFA illustrated in Figure \ref{fig:example-sfa-temperature} (2) detects the previously defined pattern.%

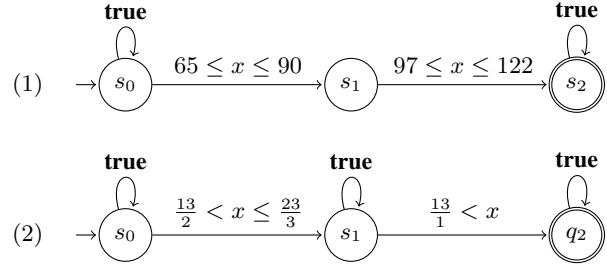
\begin{figure}[t] 
	\centering \small
	\begin{tikzpicture}[shorten >=1pt,node distance=3cm,on grid,auto,
		initial text= {},
		initial distance= {3mm}
		]
		\tikzstyle{every state}=[minimum size=0.7cm]

		\node[state,initial] (q0) {$s_{0}$};
		
		\node [node distance=1.3cm, left=of q0] {$(1)$};

		\node[state] (q1) [right=of q0] {$s_{1}$};
		
		\node[state, accepting]  (q2) [right=of q1]	{$s_{2}$};

	\path[->]
		(q0) edge node{$65 \leq x \leq 90$} (q1)
		(q1) edge node{$97 \leq x \leq 122$} (q2)
	;
	\draw[->]
		(q0) edge[loop above] 	node{$\true$} (q0)
		(q2) edge[loop above] 	node{$\true$} (q2)
	;
	
	\begin{scope}[yshift=-2cm]
		\node[state,initial] (q0) {$s_{0}$};
		
		\node [node distance=1.3cm, left=of q0] {$(2)$};
		
		\node[state] (q1) [right=of q0] {$s_{1}$};
		
		\node[state, accepting]  (q2) [right=of q1]	{$q_{2}$};
		
		\path[->]
		(q0) edge node{$\frac{13}{2} < x \leq \frac{23}{3}$} (q1)
		(q1) edge node{$\frac{13}{1} < x$} (q2);
		\draw[->]
		(q0) edge[loop above] 	node{$\true$} (q0)
		(q1) edge[loop above] 	node{$\true$} (q1)
		(q2) edge[loop above] 	node{$\true$} (q2)
		;
	\end{scope}
	\end{tikzpicture}
	\caption{Two SFA with inequality formulas. (1) is the SFA of a RGX and (2) is the SFA of a time series pattern.}
	\label{fig:example-sfa-temperature}
\end{figure}

The two previous examples are simplified cases to showcase the use of SFA with inequality formulas in practice. This setting can easily be extended to more complex scenarios, for example, where data symbols can be a tuple of the form $(a, q)$ where $a$ is a label and $q$ is a rational number. %
For the sake of presentation, we prefer to present SFA and their examples only over rational numbers, but one can easily extend them to other scenarios. 

\paragraph{Learning SFAs using MAT.}
The following result\footnote{\cite{10.1007/978-3-319-96145-3_23} presents Theorem~\ref{theo:learning-automata} with more details regarding the exact complexity of its MAT learning algorithm. Here, we oversimplified the upper bound on the size of the representation, which better fits our purposes.} presented in \cite{10.1007/978-3-319-96145-3_23} is crucial for deriving our result regarding MAT learning SFAs. 
\begin{theorem}[Theorem 1 in \cite{10.1007/978-3-319-96145-3_23}]\label{theo:learning-automata}
	Let $\galgo$ be an algorithm that learns $\IneqF$ using MAT with $f(n)$ membership queries and $g(n)$ equivalence queries. Then SFA can be learned using MAT with $n^2 \cdot f(n)+n^2 \cdot g(n) \cdot \log(m)$ membership queries and $n^2 \cdot g(n)$ equivalence queries where $m$ is the length of the longest counterexample.
\end{theorem}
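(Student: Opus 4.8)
The plan is to reconstruct the \emph{symbolic} extension of Angluin's $L^{*}$ on which the cited theorem rests. The learner maintains an observation table whose rows are \emph{access strings} — one distinguished representative $u_{s}$ per state $s$ discovered so far — and whose columns are \emph{distinguishing suffixes}, two access strings denoting the same state exactly when their rows agree. On top of this table, the transition structure out of each discovered state $s$ is recovered by running the given learner $\galgo$: for each candidate target $s'$ one maintains an instance $\galgo_{s,s'}$ whose target concept is the guard $\varphi_{s\to s'}\in\IneqF$ of the unknown automaton, regarded as the Boolean predicate ``reading the next symbol from $s$ moves to $s'$''. This is well posed because $\IneqF$ is closed under $\vee$, $\wedge$ and complement, so guards and their Boolean combinations remain inside $\IneqF$.

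First I would show how the two oracles that each $\galgo_{s,s'}$ expects are simulated from $\membOracleName$ and $\equivOracleName$ of the SFA language. A membership query ``$p\models\varphi_{s\to s'}$?'' is answered by computing the table row of $u_{s}\cdot p$ — one SFA membership query per distinguishing suffix — and returning $\true$ iff that row equals the row of $u_{s'}$; a closed and consistent table has $\bigo(n)$ rows and $\bigo(n)$ columns with $n=\size{\gamma}$, so this costs $\bigo(n)$ string queries. The sub-learners' equivalence queries are never posed directly: whenever the table is closed and consistent and every active sub-learner has output a guard hypothesis, one assembles a hypothesis SFA $\cH$ from those hypotheses — restricting, at each state, to a subfamily of guards forming a deterministic total partition of $\bbQ$ — and instead calls $\equivOracleName(\cH)$.

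The heart of the argument is counterexample processing, and it is where the $\log m$ factor appears. Given a counterexample $w$ with $|w|\le m$, a Rivest--Schapire-style binary search over the prefixes of $w$ — for a split $w=xy$, running $\cH$ on $x$ to reach a state $\widehat{s}$ and asking the single query $\membOracleName(u_{\widehat{s}}\cdot y)$ — pinpoints in $\bigo(\log m)$ membership queries the first position $i$ where the residual predicted by $\cH$ diverges from the true one. Writing $q$ for the $(i{+}1)$-st symbol of $w$ and reusing $\widehat{s}$ for the state $\cH$ occupies after $q_{1}\cdots q_{i}$, one shows the breakpoint resolves into exactly one of two cases: (a) the table row of $u_{\widehat{s}}\cdot q$ is new, which yields a fresh distinguishing suffix and, after re-closing, at least one new state; or (b) that row already matches some $u_{s'}$ — so the target sends $q$ from $\widehat{s}$ to $s'$ while $\cH$ disagrees — whence $q$, with the label read off the row computation, is a valid counterexample fed to $\galgo_{\widehat{s},s'}$, forcing it to refine.

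Finally I would argue termination and count queries. Case~(a) occurs at most $|S|\le n$ times, since each occurrence strictly increases the number of discovered states. Otherwise every SFA counterexample is routed to a unique sub-learner, each of the $\bigo(n^{2})$ state pairs carries one sub-instance of $\galgo$ making $\bigo(f(n))$ membership queries (each answered with $\bigo(n)$ string queries) and raising $\bigo(g(n))$ equivalence queries (each served by one SFA equivalence query, which in turn spends $\bigo(\log m)$ string queries in the binary search). A careful amortized accounting over the at most $n$ table re-closings — the one in the cited work, which the footnote above notes we have stated in deliberately simplified form — collapses these contributions to $\bigo(n^{2}f(n)+n^{2}g(n)\log m)$ membership queries and $\bigo(n^{2}g(n))$ equivalence queries, as claimed. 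The main obstacle I anticipate is step three: proving that the binary-search breakpoint always resolves cleanly into case~(a) or (b), that the move taken is valid and strictly progress-making (states only ever increase and never merge; each routed $q$ is genuinely inconsistent with the relevant sub-learner's current guard), and that the intermediate $\cH$ can always be kept deterministic and total despite its guards being mid-learning. Together with Theorem~\ref{theo:learning-intervals}, which supplies a concrete $\galgo$ for $\IneqF$ with $f$ and $g$ linear in the guard size, this gives MAT learnability of SFA over $\bbQ$ within a polynomial number of queries.
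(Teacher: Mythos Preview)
The paper does not prove this statement; it is quoted as Theorem~1 of the cited work (Argyros and D'Antoni), and the surrounding text together with the footnote make clear it is imported as a black box --- Corollary~\ref{cor:fully-learning-SFA} is obtained simply by plugging the paper's own Theorem~\ref{theo:learning-intervals} into it. There is therefore no proof in the paper to compare your proposal against.

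That said, your sketch is a faithful high-level reconstruction of the approach in the cited work: an $L^{*}$-style observation table over access strings and distinguishing suffixes, sub-learners $\galgo$ for the guards at each state, simulation of those sub-learners' membership queries via $\bigo(n)$ string queries each, assembly of a deterministic hypothesis and a Rivest--Schapire binary search on counterexamples contributing the $\log(m)$ factor, and an amortized count collapsing to the stated bounds. The obstacles you flag --- that the breakpoint must always resolve cleanly into ``new state'' versus ``wrong guard'', and that the intermediate hypothesis must be kept deterministic and total while guards are mid-learning --- are exactly the technical points the cited paper handles. One minor point: rather than a separate Boolean sub-learner per ordered pair $(s,s')$, the cited construction is more naturally phrased as learning, at each state $s$, a single \emph{partition} of $\bbQ$ indexed by successor states; since $\IneqF$ is closed under Boolean operations the two framings are interchangeable for the purposes of the bound, but the partition view is what makes the connection to the paper's finite-piecewise-function learner most direct.
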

In other words, a good MAT learning algorithm for the class of inequality formulas $\IneqF$ implies a good MAT learning algorithm for SFA with inequality predicates. One can easily note that inequality formulas are a special case of finite piecewise functions. Indeed, we already noticed that a formula $\varphi \in \IneqF$ defines a function $\sem{\varphi}: \bbQ \mapsto \Sigma$ where $\Sigma = \{0,1\}$. Furthermore, $\sem{\varphi}$ is a finite piecewise function and, if $\sem{\varphi}$ has a canonical representation of the form $(\dagger)$:
\[
\varphi^* := \bigvee_{j: \sem{\varphi}(I_j) = 1} \varphi_{I_j}
\]
where $\varphi_{I}$ is the formula representing the interval $I$. Then, $\size{\varphi^*} \in \bigo(\size{\sem{\varphi}})$, that is, the minimum size of a representation in $\IneqF$ for $\varphi$ is proportional to the size of the canonical representation of $\sem{\varphi}$. Then, by applying Theorem~\ref{theo:learning-intervals} for learning finite piecewise functions with Theorem~\ref{theo:learning-automata} for learning SFAs we get the following corollary.

\begin{corollary}\label{cor:fully-learning-SFA}
	SFAs can be learned using MAT with $n^3 \cdot \log(m)$ membership queries and $n^3$ equivalence queries where $m$ is the length of the longest counterexample.
\end{corollary}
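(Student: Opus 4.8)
The plan is to combine Theorem~\ref{theo:learning-intervals} with Theorem~\ref{theo:learning-automata}, using the fact (already observed in the text) that $\IneqF$ is, up to a linear-size change of representation, a subclass of the finite piecewise functions over $\Sigma = \{0,1\}$. Concretely, the first step is to turn the algorithm $\galgo_{\mathrm{pw}}$ of Theorem~\ref{theo:learning-intervals} into a MAT learner $\galgo_{\IneqF}$ for $\IneqF$. Given membership and equivalence oracles for an unknown predicate $\gamma = \sem{\varphi}$, run $\galgo_{\mathrm{pw}}$ on the very same membership oracle (a membership query is ``given $q \in \bbQ$, return $\gamma(q)$'' in both settings, so it passes through unchanged); whenever $\galgo_{\mathrm{pw}}$ issues an equivalence query on a piecewise representation $\rep = (I_1,b_1)\cdots(I_k,b_k)$ with $b_j \in \{0,1\}$, translate it to the inequality formula $\psi_\rep := \bigvee_{j:\, b_j=1}\varphi_{I_j}$ (taking $\false$ if the disjunction is empty) and forward the oracle's answer. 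Since $\sem{\psi_\rep}=\sem{\rep}$, the oracle returns $(\true,\bot)$ in exactly the same cases and any counterexample for $\psi_\rep$ is a counterexample for $\rep$, so $\galgo_{\mathrm{pw}}$ behaves correctly; when it halts with $\rep^\ast = \gamma$, output $\varphi^\ast := \psi_{\rep^\ast}$.

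The second step is the query count. Each equivalence query of $\galgo_{\mathrm{pw}}$ becomes exactly one equivalence query of $\galgo_{\IneqF}$ and the membership oracle is untouched, so $\galgo_{\IneqF}$ makes the same asymptotic number of queries as $\galgo_{\mathrm{pw}}$, namely $\bigo(\size{\gamma})$ of each type where $\size{\gamma}$ here is the size of the canonical piecewise representation of $\gamma$. By the observation made before the statement, the canonical piecewise representation and the minimum $\IneqF$-representation of $\gamma$ have sizes within a constant factor of each other (every endpoint of the canonical intervals is a coefficient of any defining formula, and $\varphi^\ast$ has size $\bigo(\sum_j \size{I_j})$). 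Hence, writing $n := \size{\gamma}$ for the $\IneqF$-size of the unknown predicate, $\galgo_{\IneqF}$ learns $\IneqF$ using MAT with $f(n) = \bigo(n)$ membership queries and $g(n) = \bigo(n)$ equivalence queries, and — importantly — this bound does not depend on the counterexamples returned by the oracle.

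The third and final step is to instantiate Theorem~\ref{theo:learning-automata} with this $\galgo_{\IneqF}$: SFAs can be learned using MAT with $n^2 \cdot f(n) + n^2 \cdot g(n)\cdot\log(m) = \bigo(n^3) + \bigo(n^3 \log m) = \bigo(n^3 \log m)$ membership queries and $n^2 \cdot g(n) = \bigo(n^3)$ equivalence queries, where $m$ is the length of the longest counterexample. This is exactly the claimed bound.

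I expect the only genuine obstacle to be the reconciliation of the two size measures: Theorem~\ref{theo:learning-intervals} gives a bound linear in the \emph{canonical piecewise} size, whereas Theorem~\ref{theo:learning-automata} expects a bound phrased in the \emph{native} $\IneqF$-representation size of the transition predicates. One must check that these two notions agree up to constant factors in both directions, and that the representation translation used inside the reduction ($\rep \mapsto \psi_\rep$ and, at termination, $\rep^\ast \mapsto \varphi^\ast$) never inflates the number of oracle calls, so that ``linear'' stays ``linear'' all the way through the composition.
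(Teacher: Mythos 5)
Your proposal is correct and follows essentially the same route as the paper: observe that $\IneqF$-concepts are finite piecewise functions over $\{0,1\}$, note that the canonical piecewise size and the minimum $\IneqF$-representation size agree up to constants, and then substitute $f(n)=g(n)=\bigo(n)$ from Theorem~\ref{theo:learning-intervals} into the bound of Theorem~\ref{theo:learning-automata}. You spell out the query-forwarding reduction and the direction of the size comparison (that canonical endpoints must appear as coefficients of any defining formula, so $\size{\sem{\varphi}}=\bigo(\size{\gamma})$) more explicitly than the paper, which only states $\size{\varphi^*}\in\bigo(\size{\sem{\varphi}})$, but the underlying argument is the same.
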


It is important to note that previous works like \cite{learning-symbolic-automata, 10.1007/978-3-319-96145-3_23,Mens_2015} settled the basis on MAT learning SFAs (e.g., Theorem~\ref{theo:learning-automata}); however, their results rely on simple predicates (e.g., equality predicates) or strong assumptions over the teacher (e.g., the equivalence query always returns the minimal counterexample). Instead, to the best of our knowledge, Corollary~\ref{cor:fully-learning-SFA} is the first result that shows \emph{fully} MAT learning of SFA over non-trivial predicates. Furthermore, it is efficient in the sense that Theorem~\ref{theo:learning-intervals} is the best that one could achieve for combining it with Theorem~\ref{theo:learning-automata}.

	\section{The Stern-Brocot tree and break links}\label{sec:properties}

In this section, we present the Stern-Brocot tree of rational numbers and some of its properties. Then we introduce the concept of a break link and its connection with convergent fractions, which will be fundamental in supporting the efficient learning algorithm of finite piecewise functions.

\paragraph{The Stern-Brocot tree.} The Stern-Brocot tree ($\SBtree$) is a construction to represent the set of positive rational numbers presented by \cite{Stern1858,Brocot1861}. In this paper we consider an extended version of this tree that contains the whole set of rational numbers $\bbQ$. %
The theory for the Stern-Brocot tree is well-studied, so we present here the main definitions that are important for our algorithmic result (see \cite{graham94} for further details).

The \emph{Stern-Brocot tree} is a directed rooted tree $\SBtree = (V(\SBtree), E(\SBtree))$ such that $\SBtree$ is an infinite complete binary tree where $V(\SBtree) = \bbQ$ are the set of vertices (written as irreducible fractions), $E(\SBtree) = \{(u, \leftChild{u}), (u, \rightChild{u}) \mid u \in V(\SBtree)\}$ is the set of edges, and $\frac{0}{1}$ is the root.
Figure \ref{fig:stern-brocot-tree} illustrates the structure of the Stern-Brocot tree up to depth 4.
Every vertex $u$ in $V(\SBtree)$ has exactly two children, the left child $\leftChild{u}$ and the right child $\rightChild{u}$, which are placed from left to right, following the left-to-right order of their children.

The construction of $\SBtree$ is as follows: for each node $u = \frac{a}{b}$ we define the {\em boundary} relation $B \subseteq \bbQinf \times \bbQ \times \bbQinf$ as the smallest set that satisfies that $(\frac{-1}{0},\frac{0}{1},\frac{1}{0}) \in B$, and that if $(\frac{c_1}{d_1},\frac{a}{b},\frac{c_2}{d_2}) \in B$, then $(\frac{c_1}{d_1},\frac{c_1+a}{d_1+b},\frac{a}{b}) \in B$ and $(\frac{a}{b},\frac{a+c_2}{b+d_2}, \frac{c_2}{d_2}) \in B$. 
It can be shown that every irreducible fraction, except $\frac{-1}{0}$ and $\frac{1}{0}$, appears as the middle component of some triple in $B$ exactly once.
With the relation $B$, we define the functions $\operatorname{left},\operatorname{right}: \bbQ\to \bbQ$ of $\SBtree$ by: 
\[
\leftChild{\frac{a}{b}} = \frac{a+c_1}{b+d_1} \ \ \text{ and } \ \ \rightChild{\frac{a}{b}} = \frac{a+c_2}{b+d_2},
\] 
for every $\frac{a}{b} \in \bbQ$ such that $(\frac{c_1}{d_1},\frac{a}{b},\frac{c_2}{d_2})\in B$.
One can check that this relation holds in Figure \ref{fig:stern-brocot-tree}.
\cristian{TODO: Agrega esta definición de parent al apendice cuando se use por primera vez (o en esa demostración): "Further, for every $q \in \bbQ \setminus \{\frac{0}{1}\}$ we define $\parent{q}$ as the only rational $p \in \bbQ$ such that $\leftChild{p} = q$ or $\rightChild{p} = q$."}

\begin{figure}[t]
  \centering
  \scalebox{0.9}{
  \begin{tikzpicture}  \tikzset{
  node line/.style={thick}
}

\forestset{
  nodeGreen/.style={fill=gray!60!},
  nodeRed/.style={}
}

  \node[inner sep=0pt] {
    \begin{forest}
      for tree={
        rectangle,
        minimum size=1.5em,
        inner sep=1pt,
        l=1mm, s sep=0.2mm,
        anchor=center,
        math content
      }
      [
        {\tfrac{0}{1}}, nodeGreen
        [
          {\tfrac{-1}{1}}, nodeRed
          [
            {\tfrac{-2}{1}}, nodeRed
            [
              {\tfrac{-3}{1}}, nodeRed
              [
                {\tfrac{-4}{1}}, nodeRed
              ]
              [
                {\tfrac{-5}{2}}, nodeRed
              ]
            ]
            [
              {\tfrac{-3}{2}}, nodeRed
              [
                {\tfrac{-5}{3}}, nodeRed
              ]
              [
                {\tfrac{-4}{3}}, nodeRed
              ]
            ]
          ]
          [
            {\tfrac{-1}{2}}, nodeGreen
            [
              {\tfrac{-2}{3}}, nodeGreen
              [
                {\tfrac{-3}{4}}, nodeRed
              ]
              [
                {\tfrac{-3}{5}}, nodeGreen
              ]
            ]
            [
              {\tfrac{-1}{3}}, nodeGreen
              [
                {\tfrac{-2}{5}}, nodeGreen
              ]
              [
                {\tfrac{-1}{4}}, nodeGreen
              ]
            ]
          ]
        ]
        [
          {\tfrac{1}{1}}, nodeRed
          [
            {\tfrac{1}{2}}, nodeGreen
            [
              {\tfrac{1}{3}}, nodeGreen
              [
                {\tfrac{1}{4}}, nodeGreen
              ]
              [
                {\tfrac{2}{5}}, nodeGreen
              ]
            ]
            [
              {\tfrac{2}{3}}, nodeRed
              [
                {\tfrac{3}{5}}, nodeRed
              ]
              [
                {\tfrac{3}{4}}, nodeRed
              ]
            ]
          ]
          [
            {\tfrac{2}{1}}, nodeGreen
            [
              {\tfrac{3}{2}}, nodeRed
              [
                {\tfrac{4}{3}}, nodeRed
              ]
              [
                {\tfrac{5}{3}}, nodeGreen
              ]
            ]
            [
              {\tfrac{3}{1}}, nodeGreen
              [
                {\tfrac{5}{2}}, nodeGreen
              ]
              [
                {\tfrac{4}{1}}, nodeGreen
              ]
            ]
          ]
        ]
      ]
    \end{forest}
  };
\end{tikzpicture}
}

  \centering
\[  \rep_1 \,=\, \left((\tfrac{-1}{0}, \tfrac{-2}{3}), B\right)\left([\tfrac{-2}{3}, \tfrac{1}{2}], A\right)\left((\tfrac{1}{2}, \tfrac{3}{2}], B\right)\left((\tfrac{3}{2}, \tfrac{1}{0}), A\right)\]

  \caption{Top: The finite piecewise function $\gamma_1$ (see Section~\ref{sec:setting}) shown in a Stern–Brocot tree up to depth 4. Nodes $q$ for which $\gamma(q)=A$ (resp. $B$) are represented by gray (resp. white)  blocks. Bottom: The canonical representation of $\gamma_1$.}
  
  \label{fig:stern-brocot-tree}
\end{figure}

Since $\SBtree$ forms a binary tree and 
every $\frac{a}{b}\in \bbQ$ is uniquely represented as a vertex in $V(\SBtree)$, then there is a unique path from the root $\frac{0}{1}$ to $\frac{a}{b}$. We can encode this path as a string in $\{L, R\}^*$ of the left-child ($L$) and right-child ($R$) moves from $\frac{0}{1}$ to $\frac{a}{b}$. 
For instance, one can check in Figure~\ref{fig:stern-brocot-tree} that $LRL$ represents $\frac{-2}{3}$ and $RLL$ represents $\frac{1}{3}$. In particular, negative numbers starts with $L$, positive numbers with $R$, and $\frac{0}{1}$ is represented by the empty string. Furthermore, we can succinctly encode a string in $\{L, R\}^*$ by its \emph{run-length encoding} of the contiguous sequences of $R$ an $L$ symbols.  For example, the string
$RLLLLRLLRRRL$ (which represents $\frac{23}{108}$) can be grouped as
$R^1L^4R^1L^2R^3L^1\!$.
In general, for every $q \in \bbQ$ we can define its \emph{SB-encoding}:
\[
\SBenc{q} \ = \ \  \sim\!\![a_1, \ldots, a_n]
\]
where $\sim \, \in \{+,-\}$, $n \geq 1$, and $a_i \geq 1$ such that:
\begin{itemize}
	\item if $\sim \,= +$ and $n$ is even, then $S = R^{a_1} L^{a_2} \ldots R^{a_{n-1}} L^{a_{n}}$;
	\item if $\sim \,= +$ and $n$ is odd, then $S = R^{a_1} L^{a_2} \ldots L^{a_{n-1}} R^{a_{n}}$;
	\item if $\sim \,= -$ and $n$ is even, then $S = L^{a_1} R^{a_2} \ldots L^{a_{n-1}} R^{a_{n}}$;
	\item if $\sim \,= -$ and $n$ is odd, then $S = L^{a_1} R^{a_2} \ldots R^{a_{n-1}} L^{a_{n}}$;
\end{itemize}
and $S$ is the path from $\frac{0}{1}$ to $q$ in $\SBtree$. In particular, we define the empty array as the SB-encoding of $\frac{0}{1}$. For example, from Figure~\ref{fig:stern-brocot-tree} one can check that $\SBenc{\frac{-2}{3}} = -[1,1,1]$ and $\SBenc{\frac{1}{3}} = +[1,2]$.
It is worth noticing that the SB-encoding is directly related to the \emph{continuous fraction} representation of rational numbers (see~\cite{graham94})---yet we will present the notion solely as the path of left- and right-child moves in $\SBtree$, as this suits our purposes better. 

An important property of the SB-encoding of rational numbers is that its size its not much bigger than the irreducible representation of the number.
\begin{proposition}\label{prop:continued-fraction-representation}
	Given any rational number $q=\frac{a}{b}$ and its corresponding SB-encoding $\sim\!\![a_{1}, \dots, a_{n}]$, 
	it holds that $\sum_{i=1}^{n}\log(a_{i})\in\bigo(\size{q})$.
\end{proposition}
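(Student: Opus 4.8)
The plan is to tie the entries $a_1,\dots,a_n$ of $\SBenc{q}$ to the numerator and denominator of $q$ through the mediant structure of $\SBtree$, and then bound their product by $|a|\cdot b$. By the left–right symmetry of $\SBtree$ it suffices to treat $\SBenc{q}=+[a_1,\dots,a_n]$: if $q<0$, its path is obtained from that of $-q$ by interchanging $L$ and $R$, which changes neither the $a_i$ nor $\size{q}$ (and $q=0$ has no entries). So fix a positive $q=\frac{a}{b}$ (irreducible, $a,b\ge 1$) with Stern–Brocot path $S=R^{a_1}L^{a_2}R^{a_3}\cdots$, the runs alternating in direction. Descend $S$ run by run: for $j=1,\dots,n$ let $\frac{P_j}{Q_j}$ be the node reached after the first $j$ runs, and let $\frac{C_j}{D_j}$ be the boundary of that node that remains fixed throughout run $j$ — the right boundary when run $j$ is a block of $R$'s, the left boundary when it is a block of $L$'s. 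Two elementary facts drive the argument: a block of $k$ same-direction moves starting from a node $\frac{x}{y}$ and keeping the far boundary $\frac{c}{d}$ fixed ends at $\frac{x+kc}{y+kd}$; and after run $j-1$, the boundary lying on the side opposite to run $j-1$'s direction is exactly the node obtained from the end of run $j-1$ by deleting its last move. Combining these, and setting $P_0=0$, $Q_0=1$, $C_1=1$, $D_1=0$ (the last two encoding the boundary $\frac{1}{0}$ used by the initial $R$-block), yields the recurrences
\[
P_j=a_jC_j+P_{j-1},\qquad Q_j=a_jD_j+Q_{j-1},\qquad C_j=P_{j-1}-C_{j-1},\qquad D_j=Q_{j-1}-D_{j-1}
\]
(the last two for $j\ge 2$). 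Since every node of $\SBtree$ is an irreducible fraction, $\frac{P_n}{Q_n}=\frac{a}{b}$ forces $P_n=a$ and $Q_n=b$.

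Next I would show by induction on $j\ge 2$ that $0\le D_j\le Q_{j-1}$ (base: $D_2=1=Q_1$; step: $D_{j+1}=Q_j-D_j$ with $Q_j=a_jD_j+Q_{j-1}\ge Q_{j-1}\ge D_j\ge 0$). Consequently $D_{j+1}=(a_j-1)D_j+Q_{j-1}\ge a_jD_j$, and since $D_2=1$ this telescopes to $D_j\ge\prod_{i=2}^{j-1}a_i$; hence $b=Q_n\ge a_nD_n\ge\prod_{i=2}^{n}a_i$.

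It remains to absorb $a_1$. If $n\ge 2$, the path leaves the integer node $\frac{a_1}{1}$ by an $L$-move, so $q$ lies strictly inside its left subtree, i.e. strictly between that node's left boundary $\frac{a_1-1}{1}$ and $\frac{a_1}{1}$ itself; thus $a_1-1<q<a_1$, so $a_1<q+1\le a+1$ (using $q=\frac{a}{b}\le a$), and therefore $a_1\le a$. If $n=1$ then $q=\frac{a_1}{1}$, so $a_1=a$. Either way $\prod_{i=1}^{n}a_i=a_1\prod_{i=2}^{n}a_i\le a\cdot b$, whence $\sum_{i=1}^{n}\log a_i=\log\!\big(\prod_{i=1}^{n}a_i\big)\le\log a+\log b=\size{q}\in\bigo(\size{q})$; for $q<0$ the same computation with $|a|$ in place of $a$ gives the identical bound.

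The most delicate part is the careful justification of the four recurrences — in particular identifying which boundary stays fixed during each run, linking it to the boundary produced by the previous run, and handling the degenerate boundary $\frac{1}{0}$ at the root; once these are pinned down, both the telescoping inequality and the estimate $a_1\le a$ are routine.
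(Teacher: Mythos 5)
Your proof is correct, but it follows a genuinely different route than the paper's. The paper translates the $\operatorname{SB}$-encoding into a continued fraction, writing $q+1 = a_1 + \tfrac{1}{p+1}$ where $p$ has encoding $+[a_2,\dots,a_n]$, and then runs a single induction on $n$: it shows $a_1 = \lfloor c/d \rfloor$ for $q+1 = \tfrac{c}{d}$, uses $p+1 = \tfrac{c \bmod d}{d}$, and obtains the telescoping inequality $\log(a_1) \le \size{q+1} - \size{p+1}$, yielding $\sum_i \log(a_i) \le \size{|q|+1}$. You instead never invoke the continued-fraction identity: you track the convergent $\tfrac{P_j}{Q_j}$ and the far boundary $\tfrac{C_j}{D_j}$ explicitly through the mediant recurrences of the tree, prove $0\le D_j\le Q_{j-1}$ by induction, telescope $D_{j+1}\ge a_j D_j$ to get $b \ge \prod_{i\ge 2} a_i$, and handle $a_1$ by a separate bracketing argument ($a_1 - 1 < q < a_1$ implies $a_1\le a$), arriving at $\prod_i a_i \le |a|\cdot b$, i.e.\ $\sum_i\log a_i\le \size{q}$. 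Your version is more self-contained (it only uses the mediant structure, not the continued-fraction formula the paper states ``without proof'') and it gives the marginally sharper bound $\size{q}$ rather than $\size{|q|+1}$, at the cost of bookkeeping four coupled recurrences and a case split for $a_1$; the paper's version is terser once one accepts the continued-fraction correspondence. One small bookkeeping remark: the recurrence $C_j = P_{j-1}-C_{j-1},\ D_j = Q_{j-1}-D_{j-1}$ identifies the new fixed boundary with the parent of the node at the end of run $j-1$, which is indeed the boundary on the opposite side from the direction of that run; this is exactly the ``delicate part'' you flag, and it does hold, so the argument goes through.
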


Notice that every array $\sim\!\![a_1, \ldots, a_n]$ uniquely represents a rational number in $\SBtree$. 
For every $q \in \bbQ$ with $\SBenc{q} = \,\sim\!\![a_1, \ldots, a_n]$ 
we call the rational number $p_m$ with $\SBenc{p_m} =\, \sim\!\![a_{1}, \dots, a_{m}]$ (for $1 \leq m \leq n$) the \emph{$m$-th convergent fraction} of $q$.
In addition, we say that $p_m$ is a strict convergent fraction of $q$ if $m < n$. 
For instance, $\frac{-1}{1}$ and $\frac{-1}{2}$ are strict convergent fractions of $\frac{-2}{3}$ and $\frac{1}{1}$ is the only strict convergent fraction of $\frac{1}{3}$. 
One can think of the convergent fractions of a rational number $q$ as the nodes at the turning points in the path to $q$ from the root in $\SBtree$. %
Convergent fractions will be key to our main characterization.

\rodrigo{creo que falta actualizar la notación de canonical representation de $R$ a $\cR$ en esta sección}
\cristian{Actualizada.}

\paragraph{Break links.} For learning an unknown finite piecewise function $\gamma$ with canonical representation $\rep$ of the form $(\dagger)$, 
the objective is to identify every interval $I_j$ and the corresponding value $\gamma(I_j)$.
In the next section, we present a learning algorithm that, using the Stern-Brocot tree, constructs $\rep$ by finding the convergent fractions of each endpoint of the intervals in~$\rep$. 
In this subsection, we introduce a notion inside the Stern-Brocot tree, called \emph{break links}, that we will use as a proxy to find all the values that appear in $\rep$.

Formally, given an interval $I = \langle p, q\rangle$, we denote by $\bounds(I) = \{p, q\}$ , namely, the lower and upper bounds of $I$. 
For a finite piecewise function $\gamma$ with canonical representation $\rep$ like ($\dagger$), we define $\bounds(\gamma) = \{\bounds(I_j) \mid j \leq k\} \setminus \{-\infty, \infty\}$. 
That is, $\bounds(\gamma)$ contains the bounding values inside $\gamma$ except $-\infty$ and $\infty$. 
Therefore, the main goal in our learning algorithm will be to find all the values in $\bounds(\gamma)$ for $\gamma$.   

In the following, we denote the \emph{unordered interval} of two rational numbers $p,q$ as: $\unorderInt{p,q} = [\min(p,q), \max(p,q)]$.
Let $\gamma$ be a finite piecewise function and $(p,q)$ be an edge in $\SBtree$.
We say that $(p,q)$ is a {\em break link}  of $\gamma$ if $p$ is $\frac{0}{1}$ or $\unorderInt{p,q}$ is not monochromatic with respect to $\gamma$, meaning that there exist $x,y \in \unorderInt{p,q}$, such that $\gamma(x) \neq \gamma(y)$. 
In Figure~\ref{fig:stern-brocot-tree}, we show the Stern-Brocot tree up to depth 4
along with the finite piecewise function $\gamma_1$ introduced in Section~\ref{sec:setting} where nodes $q$ for which $\gamma(q)=A$ (resp. $B$) are represented by gray (resp. white) blocks.
One can check that $(\tfrac{0}{1}, \tfrac{1}{1})$ and $(\tfrac{1}{2}, \tfrac{2}{3})$ are break links of $\gamma_1$ but $(\tfrac{-1}{1}, \tfrac{-2}{1})$ and   $(\tfrac{1}{2}, \tfrac{1}{3})$ are not.
\cristian{Saque la definición de "sees" y también quite el example environment. No estamos usando example environment en el paper, entonces por consistencia mejor sacarlo.}

Next, we present the relation between break links and the convergent fractions of the bounds in a piecewise function~$\gamma$.

\begin{proposition}\label{prop:break-links-convergent-equivalence}
	Given a finite piecewise function $\gamma$, (1) for every convergent fraction $q$ of a bound in $\bounds(\gamma)$, there exists a break link $(p_1,p_2)$ of $\gamma$ such that $q\in \{p_1,p_2\}$; and (2) for every break link $(p_1,p_2)$ of $\gamma$ at least one among $p_1$ and $p_2$ is a convergent fraction of some bound in $\bounds(\gamma)$.
\end{proposition}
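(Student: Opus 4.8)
The plan is to prove both directions by exploiting the structure of the Stern–Brocot tree and the basic fact that for any rational $q$, the path from $\frac{0}{1}$ to $q$ in $\SBtree$ passes precisely through the convergent fractions of $q$ at the ``turning points'' and through intermediate nodes on the straight runs. A useful observation to isolate first: if $(p,q)$ is an edge of $\SBtree$ with $p$ the parent, then $\unorderInt{p,q}$ is exactly the set of rationals lying in the subtree interval ``between'' $p$ and $q$; more precisely, every rational whose path to the root uses the edge $(p,q)$ is either $q$ or lies strictly between $p$ and $q$, and $q$ itself is the unique rational of minimal $\size{\cdot}$ in the open interval determined by $p$ and its relevant boundary. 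I would state this as a short auxiliary claim about how $\SBtree$ refines $\bbQ$ into nested intervals, since both parts of the proposition reduce to reasoning about which edges ``straddle'' a bound of $\gamma$.

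For part (1): let $q$ be a convergent fraction of some bound $b \in \bounds(\gamma)$. If $q = \frac{0}{1}$, then its edge to either child is a break link by definition, so assume $q \neq \frac{0}{1}$. Since $q$ is a convergent fraction of $b$, the path to $b$ passes through $q$, and $b$ lies in the subtree rooted at the appropriate child $c$ of $q$ (or $b = q$, in which case $q$ itself being a bound means monochromaticity fails around it). I would argue that the edge $(q, c)$ on the path toward $b$ is a break link: the unordered interval $\unorderInt{q,c}$ contains an entire half-open neighborhood of $b$ on one side (because $b$ sits in the subtree of $c$, hence in the interval delimited by $q$ and one of its ancestors), and since $b$ is a genuine boundary of the canonical representation, $\gamma$ takes both values $\gamma(I_{j})$ and $\gamma(I_{j+1})$ arbitrarily close to $b$. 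Care is needed at the very last step of the path where $b$ could equal a node rather than be strictly interior, and when $b$ is the root's neighbor; I would handle these by the ``$p$ is $\frac{0}{1}$'' escape clause and by choosing the child of $q$ on the side where $\gamma$ changes.

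For part (2): let $(p_1,p_2)$ be a break link, with $p_1$ the parent, so $\unorderInt{p_1,p_2}$ is not monochromatic (the case $p_1 = \frac{0}{1}$ is degenerate but then $\frac{0}{1}$ is a convergent fraction of every rational, in particular of any bound, after checking $\bounds(\gamma)\neq\emptyset$ or handling the all-one-color function separately). Non-monochromaticity means some bound $b \in \bounds(\gamma)$ lies in the closed interval $\unorderInt{p_1,p_2}$. Now $\unorderInt{p_1,p_2}$ is, up to endpoints, exactly the interval of rationals hanging off the edge $(p_1,p_2)$, i.e. $p_2$ together with everything strictly between $p_1$ and $p_2$; every such strictly-between rational has $p_2$ as an ancestor, hence has $p_2$ among its convergent-or-not nodes. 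The key sub-step is: if $b$ lies strictly between $p_1$ and $p_2$, then the path from the root to $b$ goes through $p_2$, and in fact $p_2$ is a turning point (convergent fraction) of $b$ unless $b$ continues straight past $p_2$ in the same direction — but ``straight past $p_2$ in the same direction from $p_1$'' would place $b$ outside $\unorderInt{p_1,p_2}$, contradiction; so $p_2$ is a convergent fraction of $b$. If instead $b$ equals $p_1$ or $p_2$, then that endpoint is a convergent fraction of itself (every rational is its own last convergent), so we are done.

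The main obstacle I expect is the boundary bookkeeping: precisely relating the \emph{closed} unordered interval $\unorderInt{p_1,p_2}$ to the \emph{half-open} interval of descendants of the edge $(p_1,p_2)$, and pinning down exactly when an ancestor $p_2$ is a ``turning point'' (true convergent) versus an interior node of a monotone run. I would dispatch this once and for all with the auxiliary claim on how $\SBtree$ induces the nested dyadic-like interval structure on $\bbQ$ (the mediant/boundary relation $B$ already gives the two boundary fractions of each node's subtree), after which both parts become a short case analysis on whether the relevant bound is interior to, or an endpoint of, the edge's interval, plus the separate trivial treatment of the root and of constant $\gamma$.
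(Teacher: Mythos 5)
Your part (2) is sound and follows the paper's own route: a non‑monochromatic $\unorderInt{p_1,p_2}$ must contain some bound $b\in\bounds(\gamma)$; if $b$ is an endpoint it is its own convergent, and if $b$ is strictly interior then, because ``continuing straight past $p_2$'' would exit $\unorderInt{p_1,p_2}$, the path to $b$ must turn at $p_2$, making $p_2$ a convergent of $b$. That is exactly the paper's argument.

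Part (1), however, has a genuine gap for \emph{strict} convergent fractions. You propose that the break link witnessing $q$ is $(q,c)$, where $c$ is the child of $q$ on the path toward the bound $b$, and you justify this by saying $\unorderInt{q,c}$ ``contains an entire half-open neighborhood of $b$.'' That is false: when $q$ is a strict convergent, $b$ lies in the interval spanned by the \emph{subtree} of $c$, which is typically much larger than $\unorderInt{q,c}$, and $b$ need not lie in $\unorderInt{q,c}$ at all. Concretely, take $\gamma$ with $\gamma(x)=A$ for $x<\tfrac{7}{3}$ and $\gamma(x)=B$ otherwise, so $\bounds(\gamma)=\{\tfrac{7}{3}\}$ and $\SBenc{\tfrac{7}{3}}=+[3,2]$ (path $RRRLL$). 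The unique strict convergent (other than the root) is $q=\tfrac{3}{1}$; the child of $q$ toward $\tfrac{7}{3}$ is $c=\tfrac{5}{2}$, but $\unorderInt{\tfrac{3}{1},\tfrac{5}{2}}=[\tfrac{5}{2},\tfrac{3}{1}]$ lies entirely above $\tfrac{7}{3}$ and is monochromatic (all $B$). Both children of $\tfrac{3}{1}$ ($\tfrac{5}{2}$ and $\tfrac{4}{1}$) are colored $B$ like $\tfrac{3}{1}$ itself, so your fallback of ``choosing the child on the side where $\gamma$ changes'' also fails. The edge you actually need is the one to the \emph{parent}: $\bigl(\parent{q},q\bigr)=(\tfrac{2}{1},\tfrac{3}{1})$ is a break link because $[\tfrac{2}{1},\tfrac{3}{1}]$ straddles $\tfrac{7}{3}$. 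This is precisely the paper's move: for a strict convergent $q$ of $b$, the turning-point structure guarantees $\unorderInt{q,\parent{q}}\supseteq[\leftChild{b},\rightChild{b}]$, and this enclosing interval is forced to be bichromatic since $\gamma$ changes value on at least one side of $b$. Your case $q=b$ (using $(q,\leftChild{q})$ or $(q,\rightChild{q})$) is fine, but the strict-convergent case needs to be rebuilt around $\parent{q}$ rather than a child of $q$.
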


The previous proposition is crucial for our learning algorithm to find the endpoints in $\gamma$. 
Given that every value in $\bbQ$ is a convergent fraction of itself,
(1) implies that, if we find all break links of $\gamma$ in $\SBtree$, then we will have all the elements in  $\bounds(\gamma)$. 
Furthermore, by (2) and  Propositions~\ref{prop:continued-fraction-representation} the number of break links of $\gamma$ will not be bigger than $\size{\gamma}$.  
In the next section, we will show how to use this connection to learn any finite piecewise function $\gamma$.

    \section{The learning algorithm}\label{sec:algorithm}

The goal of this section is to present the learning algorithm that, given a membership oracle $\membOracleName$ and an equivalence oracle $\equivOracleName$ of an unknown finite piecewise function $\gamma$, computes its representation $\rep$ of the form ($\dagger$).
The algorithm works by incrementally adding break links of $\gamma$ until every break link has been discovered by the algorithm. Then, from the break links we construct the final hypothesis. 
We will start the section by describing the data structure and an algorithm to build a representation from the collected data. Then, we present the learning algorithm and prove its correctness. 

\paragraph{Building a consistent representation.}
As already mentioned, the main strategy of the algorithm is to find all break links $(p,q)$ of $\gamma$.
Instead of keeping pairs $(p,q)$, we will maintain the points $p$ and $q$ in a list $D$ of the form:
\[
D = (q_1, A_1), (q_2, A_2), \ldots, (q_m, A_m) \tag{$\ddagger$}
\] 
where $q_i \in \bbQ$ and $\gamma(q_i) = A_i$ for every $i \leq m$, and $q_1 < q_2 < \ldots < q_m$. Furthermore, every $q_i$ is part of a break link of $\gamma$, namely, $q_i \in \{p,q\}$ for some break link $(p,q)$ of~$\gamma$. We call $D$ the \emph{data structure} of our learning algorithm. 

Our first task for the learning algorithm is to show how to construct a 
representation $\rep$ from a given data structure $D$. 
Specifically, we say that a representation $\rep$ is \emph{consistent} with a data structure $D$ like ($\ddagger$) if $\rep(q_i) = A_i$ for every $i \leq m$. 
For a finite piecewise function $\gamma$ with canonical representation $\rep$ like ($\dagger$), 
we say that $q \in \bounds(\gamma)$ is a \emph{left-bound} (\emph{right-bound}) of $\gamma$ if $q \in I_j$ and $q = \inf(I_j)$ (resp. $q = \sup(I_j)$) for some $j \leq k$. In other words, $I_j$ is of the form $[q, p\rangle$ (resp. $\langle p, q]$) for some $p \in \bbQinf$. Note that every $q \in \bounds(\gamma)$ is either a left- or a right-bound of $\gamma$ or both (e.g., $I_j = [q,q]$). 

So, how do we build a consistent representation $\rep$ from $D$? 
We use a strategy to determine which of the values in $D$ are endpoints in $\bounds(\gamma)$ and, additionally, which are left- or right-bounds (or both).
Intuitively, if $q_{i-1}$ is a descendant of $q_{i}$ in $\SBtree$ and $\gamma(q_{i-1}) \neq \gamma(q_{i})$ for some $i > 1$, then this is evidence that $q_{i} \in \bounds(\gamma)$ and, furthermore, $q_{i}$ is a left-bound of $\gamma$. Similarly, if $q_{i}$ is an ancestor of $q_{i+1}$ and $\gamma(q_{i}) \neq \gamma(q_{i+1})$ for some $i < m$, then $q_{i}$ is a right-bound of $\gamma$. This intuition, in fact, is guaranteed to completely characterize $\bounds(\gamma)$ when $D$ contains all break links of $\gamma$.
\begin{proposition}\label{prop:hypothesis-construction-correctness}
	Assume that $D$ like ($\ddagger$) contains all break links of $\gamma$. Then (1) $q_i$ is a left-bound  of $\gamma$ iff $q_{i-1}$ is a descendant of $q_i$ and $A_{i-1} \neq A_{i}$, and (2) $q_i$ is a right-bound  of $\gamma$ iff $q_{i}$ is an ancestor of $q_{i+1}$ and $A_{i} \neq A_{i+1}$.
\end{proposition}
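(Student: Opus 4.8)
The plan is to prove (1) and (2) together by first establishing the two "only if" directions with a standalone argument based on break links and the combinatorics of $\SBtree$, and then deriving the two "if" directions from them. We will use two facts about the Stern–Brocot tree that follow by induction on the boundary relation $B$ and are classical (see \cite{graham94}): first, the subtree rooted at a node $u$ is exactly the set of rationals strictly between $u$'s nearest ancestor lying below $u$ and its nearest ancestor lying above $u$ (with $\pm\infty$ substituted when no such ancestor exists), the left subtree occupying the part below $u$ and the right subtree the part above; second, the descendant relation is a partial order, so no two distinct nodes are mutually ancestor and descendant. We also record two consequences of the hypothesis. Since every rational is a convergent fraction of itself, Proposition~\ref{prop:break-links-convergent-equivalence}(1) gives $\bounds(\gamma) \subseteq \{q_1,\dots,q_m\}$; and since the canonical representation of $\gamma$ changes value only at points of $\bounds(\gamma)$, the open interval $(q_{i-1},q_i)$ between any two consecutive elements of $D$ is monochromatic with respect to $\gamma$.

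For the "only if" direction of (1), assume $q_i$ is a left-bound, say $q_i=\inf(I_j)\in I_j$; since $\inf(I_1)=-\infty\notin\bbQ$ we have $j\geq 2$, so $I_{j-1}=\langle r,q_i)$ and $\gamma(I_{j-1})\neq\gamma(I_j)=A_i$ by canonicity. The interval $[\leftChild{q_i},q_i]$ then contains $q_i$ together with points of $I_{j-1}$ arbitrarily close to $q_i$, so it is not monochromatic; hence $(q_i,\leftChild{q_i})$ is a break link and $\leftChild{q_i}\in D$. As $\leftChild{q_i}$ lies in the left subtree of $q_i$, it is below $q_i$ but above $q_i$'s nearest ancestor below it, and since $q_{i-1}$ is the $D$-predecessor of $q_i$ we get $\leftChild{q_i}\leq q_{i-1}<q_i$, placing $q_{i-1}$ in the left subtree of $q_i$; in particular $q_{i-1}$ is a descendant of $q_i$. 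It remains to show $A_{i-1}\neq A_i$. Because $(q_{i-1},q_i)$ is monochromatic and contained in $I_{j-1}$, its color is $\gamma(I_{j-1})\neq A_i$, so it suffices to rule out $q_{i-1}\notin I_{j-1}$. If $q_{i-1}\notin I_{j-1}$, then from $\inf(I_{j-1})\leq q_{i-1}<q_i=\sup(I_{j-1})$ we must have $q_{i-1}=\inf(I_{j-1})=\sup(I_{j-2})\in I_{j-2}$; applying the same break-link argument to the edge $(q_{i-1},\rightChild{q_{i-1}})$, whose unordered interval meets $I_{j-2}$ at $q_{i-1}$ and $I_{j-1}$ just above it, shows $\rightChild{q_{i-1}}\in D$, forcing $q_i\leq\rightChild{q_{i-1}}$, so $q_i$ lies in the right subtree of $q_{i-1}$ and is a descendant of $q_{i-1}$ — contradicting that $q_{i-1}$ is a descendant of $q_i$. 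The "only if" direction of (2) is entirely symmetric (interchange left/right, ancestor/descendant, $\inf/\sup$, and $i-1/i+1$).

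For the "if" direction of (1), assume $q_{i-1}$ is a descendant of $q_i$ (so $q_{i-1}<q_i$ and $q_{i-1}$ lies in $q_i$'s left subtree) and $A_{i-1}\neq A_i$. Let $c$ be the common value of $\gamma$ on $(q_{i-1},q_i)$; since $A_{i-1}\neq A_i$, the value $c$ differs from at least one of them. If $c\neq A_i$, then $\gamma$ changes value at $q_i$ coming from the left, which forces $q_i=\inf(I_j)$ for the interval $I_j\ni q_i$, i.e.\ $q_i$ is a left-bound, as desired. If instead $c=A_i$, then $c\neq A_{i-1}$, so $\gamma$ changes value at $q_{i-1}$ going to the right, forcing $q_{i-1}=\sup(I_t)$ for $I_t\ni q_{i-1}$; that is, $q_{i-1}$ is a right-bound, so by the already-proved "only if" direction of (2) the element $q_{i-1}$ is an ancestor of $q_i$ — impossible, since it is also a descendant. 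Hence $c\neq A_i$ and we are done; the "if" direction of (2) is symmetric, using the "only if" direction of (1).

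The main obstacle is the $A_{i-1}\neq A_i$ clause of the "only if" direction: with only two labels, $I_{j-2}$ can share $q_i$'s color, so the inequality cannot simply be read off the canonical representation; the resolution is to observe that the offending configuration would make $q_{i-1}$ simultaneously an ancestor and a descendant of $q_i$, which the Stern–Brocot structure forbids. A secondary, purely bookkeeping obstacle is stating and proving the interval-span property of Stern–Brocot subtrees, which we isolate as a preliminary lemma proved by induction on $B$.
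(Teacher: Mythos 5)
Your proof is correct and takes essentially the same approach as the paper: prove the forward directions via the observation that $(q_i,\leftChild{q_i})$ (resp.\ $(q_i,\rightChild{q_i})$) is a break link forcing $q_{i-1}$ (resp.\ $q_{i+1}$) into the appropriate subtree, then handle the label-inequality clause and the converse directions via the interval $(q_{i-1},q_i)$ between consecutive elements of $D$. Where you differ is in the finishing move. The paper's proof at two points asserts that a color change between $q_{i-1}$ and a nearby $p<q_i$ yields a break link ``in the path from $q_{i-1}$ to $p$'' not in $D$; this is left informal, and $p$ need not even be a descendant of $q_{i-1}$. You instead make the underlying invariant explicit --- $\bounds(\gamma)\subseteq D$, hence $(q_{i-1},q_i)$ is monochromatic --- and close the $A_{i-1}\neq A_i$ case and the ``if'' directions by producing a node that would be both a strict ancestor and a strict descendant of another, which the tree order forbids. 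This buys you a fully rigorous argument and a pleasantly symmetric structure (the ``if'' of (1) bootstraps off the ``only if'' of (2)), at the minor cost of needing the already-proved half of the other clause, whereas the paper argues each direction independently. Both arguments rest on the same two Stern--Brocot facts (subtree-interval span, antisymmetry of the ancestor order), so this is a refinement of the paper's route rather than a genuinely different one.
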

\begin{algorithm}[tb]
	\caption{Construct a representation $\rep$ from $D$}
	\label{alg:hip-constructions}
	\begin{algorithmic}[1]
		\REQUIRE A data structure $D$ like $(\ddagger)$.
		\ENSURE A representation $\rep$ consistent with $D$.  \vspace{1mm}
		
		\PROCEDURE{$\constructRepresentationName$}{$D$}
		\STATE $\rep \gets \emptyset$ 
		\STATE $\boldsymbol{\langle} \gets ($\,, \ ${\sf leftBnd} \gets -\infty$, \ ${\sf currLabel} \gets A_1$ \label{in-alg:minusinftycase}
		\FOR {$i \in \{1, \ldots, m\}$ }
		\IF {$q_{i-1}$ is descendant of $q_i$ ${\sf and}$ $A_{i-1} \neq A_i$}  \label{in-alg:firstIf}
		\STATE $\rep \gets \rep \cup ( \, \boldsymbol{\langle}\,  {\sf leftBnd}, q_i), {\sf currLabel})$
		\STATE $\boldsymbol{\langle} \gets [$\,, \  ${\sf leftBnd} \gets q_i$, \ ${\sf currLabel} \gets A_i$ \label{in-alg:firstIfEnd}
		\ENDIF
		\IF {$q_{i}$ is an ancestor of $q_{i+1}$ ${\sf and}$ $A_i \neq A_{i+1}$} \label{in-alg:secondIf}
		\STATE $\rep \gets \rep \cup ( \, \boldsymbol{\langle}\,  {\sf leftBnd}, q_i], {\sf b_i})$
		\STATE $\boldsymbol{\langle} \gets ($\,, \ ${\sf leftBnd} \gets q_i$, \ ${\sf currLabel} \gets A_{i+1}$ \label{in-alg:secondIfEnd}
		\ENDIF
		\ENDFOR
		\STATE $\rep \gets \rep \cup ( \, \boldsymbol{\langle}\, {\sf leftBnd}, \infty), A_{m})$ \label{in-alg:inftycase}
		\STATE \textbf{return} $\rep$
		\ENDPROCEDURE
	\end{algorithmic}
\end{algorithm}

In Algorithm~\ref{alg:hip-constructions}, we implement this strategy for constructing a representation from $D$ where lines \ref{in-alg:firstIf}-\ref{in-alg:firstIfEnd} encode (1) and lines \ref{in-alg:secondIf}-\ref{in-alg:secondIfEnd} encode (2) plus the border cases (i.e., $-\infty$ and $\infty$) which we resolve in lines~\ref{in-alg:minusinftycase} and \ref{in-alg:inftycase}. Notice that before $D$ has all break links, $\constructRepresentation{D}$ will not necessarily produce the correct representation for $\gamma$. Nevertheless, as the following result shows, it always produces a consistent representation of $D$. 
\begin{theorem}\label{theo:hip-construction}
	Algorithm~\ref{alg:hip-constructions} outputs a representation $\rep$ that is consistent with $D$. Moreover, if $D$ contains all break links of~$\gamma$, then $\rep = \gamma$. 
\end{theorem}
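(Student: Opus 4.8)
The plan is to prove the two assertions of Theorem~\ref{theo:hip-construction} in order: first that the output $\rep$ is always consistent with $D$, and then that it equals $\gamma$ once $D$ contains all break links.

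For the first part, I would track the invariant maintained by the loop in Algorithm~\ref{alg:hip-constructions}. At the start of iteration $i$, the variables $\boldsymbol{\langle}$, ${\sf leftBnd}$, and ${\sf currLabel}$ encode a ``pending'' interval whose left endpoint is ${\sf leftBnd}$ (with bracket type $\boldsymbol{\langle}$) and whose label will be ${\sf currLabel}$; moreover, the intervals already added to $\rep$ together with this pending interval form an interval partition of $(-\infty, \ldots)$ up to ${\sf leftBnd}$, and every point $q_j$ with $j < i$ lying in an already-closed interval $(\,\boldsymbol{\langle}\, \ell, r\rangle, C)$ has been assigned its correct label, i.e. $C = A_j$. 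The key observations are: (a) when the first \textbf{if} fires at iteration $i$, the interval closed at $q_i$ is right-open, so $q_i$ itself is \emph{not} in it but goes into the next pending interval with label $A_i$; (b) when the second \textbf{if} fires, the interval closed is right-closed at $q_i$ with label ${\sf currLabel}$, and here one must check that ${\sf currLabel} = A_i$ at that moment --- which holds because either $q_i$ was just set as a left-bound in the first \textbf{if} (so ${\sf currLabel} = A_i$) or, if the first \textbf{if} did not fire, the pending label was inherited from a previous iteration and is consistent with $A_i$ by the induction hypothesis and the monotonicity of the $q_j$'s. (There is a small typo in the pseudocode, ${\sf b_i}$ where ${\sf currLabel}$ is meant; I would note this and proceed with ${\sf currLabel}$.) Carrying this invariant through the two \textbf{if}s and then through the final line~\ref{in-alg:inftycase} gives that every $q_i$ lands in an interval labelled $A_i$, i.e. $\rep$ is consistent with $D$. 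A separate easy check shows the intervals produced do form a genuine interval partition of $\bbQ$ (contiguous endpoints match in value, and bracket types are complementary: a right-open close is followed by a left-closed open, and vice versa).

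For the second part, assume $D$ contains all break links of $\gamma$. By Proposition~\ref{prop:hypothesis-construction-correctness}, the \textbf{if}-conditions in Algorithm~\ref{alg:hip-constructions} fire \emph{exactly} at the left-bounds and right-bounds of $\gamma$: $q_i$ triggers the first \textbf{if} iff $q_i$ is a left-bound of $\gamma$, and triggers the second \textbf{if} iff $q_i$ is a right-bound of $\gamma$. I would then argue that the sequence of intervals the algorithm emits is precisely the canonical interval partition $I_1 \ldots I_k$ of $\gamma$ (possibly written with some redundant splits, but representing the same concept). Concretely: the left endpoint and bracket type of each $I_j$ is determined by whether $\inf(I_j) \in I_j$, which is exactly the distinction between the ``$[$'' set in line~\ref{in-alg:firstIfEnd} (a left-bound, closed) and the ``$($'' set in line~\ref{in-alg:secondIfEnd} or line~\ref{in-alg:minusinftycase} (right-open, so the interval is left-open at that point); symmetrically for right endpoints. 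Since by Proposition~\ref{prop:break-links-convergent-equivalence}(1) every bound of $\gamma$ is itself a break-link endpoint, hence appears among the $q_i$, no endpoint of any $I_j$ is missed, and the labels attached match $\gamma(I_j)$ by consistency (which we proved, and which on the true partition pins down the label uniquely). Therefore $\rep$ and $\gamma$ agree on every rational, i.e. $\rep = \gamma$.

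The main obstacle I expect is the second part of the consistency argument, specifically verifying that ${\sf currLabel}$ always holds the correct value $A_i$ (for the right-closed case) and $A_{i+1}$ (for the label carried forward), when the two \textbf{if}s interact within a single iteration or across consecutive iterations where some conditions do not fire. This requires being careful about the order of the two \textbf{if}-blocks and about the case $I_j = [q_i, q_i]$ (a singleton interval, where both conditions fire at the same $i$): one has to confirm that the first \textbf{if} sets ${\sf currLabel} \gets A_i$ just before the second \textbf{if} closes $[\,{\sf leftBnd}, q_i]$ with that label, and that the ``descendant/ancestor'' relations in $\SBtree$ referenced by the conditions are exactly what Proposition~\ref{prop:hypothesis-construction-correctness} characterizes. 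The interval-partition bookkeeping and the $\pm\infty$ border cases (lines~\ref{in-alg:minusinftycase} and~\ref{in-alg:inftycase}) are routine once the invariant is stated precisely.
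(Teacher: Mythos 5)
Your high-level plan (loop invariant for consistency, then invoke Proposition~\ref{prop:hypothesis-construction-correctness} and~\ref{prop:break-links-convergent-equivalence} to get $\rep=\gamma$ once all break links are present) matches the paper's structure, and your argument for the second part is in fact a bit crisper than what the paper writes. But there is a genuine gap in the first part.

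The consistency claim is \emph{not} true for an arbitrary ordered list $D$ of labelled points: Algorithm~\ref{alg:hip-constructions} only places a boundary between $q_{i-1}$ and $q_i$ when one of the two is a $\SBtree$-ancestor of the other. If two consecutive entries of $D$ carried different labels but were incomparable in $\SBtree$, neither \textbf{if} would fire, the pending interval would swallow both points, and $\rep$ would be inconsistent with $D$. So the step where you say ``${\sf currLabel}$ is consistent with $A_i$ by the induction hypothesis and the monotonicity of the $q_j$'s'' is exactly where the proof must do real work, and numeric monotonicity of the $q_j$'s alone does not close it. What is needed --- and what the paper proves separately (Lemma~1 in the appendix, established by induction over the iterations of Algorithm~\ref{alg:learning-algo}, using how $\findBreakLinkName$ inserts points) --- is the structural invariant that \emph{every pair of consecutive entries of $D$ is related by ancestry in $\SBtree$}. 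Under that invariant, ``neither \textbf{if} fired for the pair $(q_{l-1},q_l)$'' really does force $A_{l-1}=A_l$, and your pending-label argument then goes through. As written, your proposal silently assumes this property of $D$; it must be stated and proved (or at least invoked as an assumption on $D$), since the theorem is applied precisely to the $D$ built by the learning algorithm and its correctness hinges on that invariant.

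A secondary, smaller remark: your induction hypothesis only asserts correctness of labels for $q_j$ lying in \emph{already-closed} intervals, but to justify ``${\sf currLabel}=A_i$ when the second \textbf{if} fires without the first'' you implicitly need the stronger invariant that ${\sf currLabel}=A_j$ for every $q_j$ currently in the pending interval. Once the ancestry invariant above is in hand, strengthening your loop invariant to cover the pending interval is straightforward, and the rest of your plan (including the border cases at $\pm\infty$ and the singleton-interval case where both \textbf{if}s fire at the same $i$) is sound and essentially the same decomposition the paper uses.
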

By the previous result, we know that the representation $\rep$ is always consistent with $D$ and, moreover, if we aim to find all break links, then eventually we will have that $\rep = \gamma$. 

\paragraph{Finding all break links.}
The learning algorithm is presented in Algorithm~\ref{alg:learning-algo}. It receives as input a membership oracle $\membOracleName$ and an equivalence oracle $\equivOracleName$, both compatible with an unknown finite piecewise function $\gamma$ in $\cC$. Its goal is to construct a representation $\rep$ for $\gamma$ of the form~($\dagger$) after a finite number of iterations. 
This is done by maintaining a data structure $D$ that stores all the break links of $\gamma$ that have been found up to the current iteration.
Since $\gamma$ is a finite piecewise function, 
the interval partition $I_1, I_2, \dots, I_k$ of $\bbQ$ must be finite and, furthermore, every endpoint $q$ in $\bounds(\gamma)$ has a finite encoding $\SBenc{q}$. 
Therefore, by Proposition~\ref{prop:break-links-convergent-equivalence} and Theorem~\ref{theo:hip-construction}, the number of break links of $\gamma$ must be finite, and, after collecting all of them in $D$, we can construct a representation $\rep$ of $\gamma$.

\begin{algorithm}[tb]
	\caption{The Learning Algorithm}\label{alg:learning-algo}
	\begin{algorithmic}[1]
		\REQUIRE A Membership Oracle $\membOracleName$ and an Equivalence Oracle $\equivOracleName$ compatible with some finite p.w. function $\gamma$.
		\ENSURE A representation $\rep$ equivalent to $\gamma$. \vspace{1mm}
		\STATE $D \gets \{(\frac{0}{1}, \membOracle{\frac{0}{1}})\}$ \label{in-alg:start-set}
		
		\WHILE{\textbf{true}} \label{in-alg:start-loop}
		
		\STATE $\rep \gets \constructRepresentation{D}$ \label{in-alg:construct-representation}
		\STATE $({\sf ans}, q^*) \gets \equivOracleName(\rep)$ \label{in-alg:ask-equivalence}
		
		\IF {${\sf ans} = \textbf{true}$} \label{in-alg:check-equivalence}
		\STATE \textbf{return} $\rep$ \label{in-alg:return-hypothesis}
		\ENDIF \label{in-alg:end-if}
		
		\STATE $r \gets \findClosestAncestor{q^*\!}{D}$ \label{in-alg:find-closest-ancestor}
		\STATE $(p, q) \gets \findBreakLink{r}{q^*\!}{D}$ \label{in-alg:find-break-link}
		\STATE $D \gets D\cup \{(p, \membOracle{p}), (q, \membOracle{q})\}$ \label{in-alg:add-new-break-link}
		
		\ENDWHILE \label{in-alg:end-loop}
	\end{algorithmic}
\end{algorithm}

Algorithm~\ref{alg:learning-algo} works by first adding a pair $(0, \gamma(0))$ into $D$ (line~\ref{in-alg:start-set}), and then, at each iteration, 
adding the components $p$ and $q$ of a new break link $(p,q)$ of $\gamma$ into $D$, along with the labels $\gamma(p)$ and $\gamma(q)$ (line \ref{in-alg:add-new-break-link}). 
Each iteration does the following:
First, it constructs a representation $\rep$ using Algorithm~\ref{alg:hip-constructions} and the array $D$ (line~\ref{in-alg:construct-representation}). Then, it asks the equivalence oracle $\equivOracleName$ whether $\rep = \gamma$; if it is true, the algorithm ends
(lines~\ref{in-alg:ask-equivalence}--\ref{in-alg:end-if}). 
If not, this means that we are missing some break link in $D$ and, then, in lines~\ref{in-alg:find-closest-ancestor}--\ref{in-alg:find-break-link} the algorithm searches the new break link for $\gamma$ in $\SBtree$. Finally, in line~\ref{in-alg:add-new-break-link}, it adds the information of the new break link into $D$, maintaining the array ordered, and avoiding adding any existing tuple~again.
By Theorem~\ref{theo:hip-construction}, we know that if the equivalence oracle returns a counterexample $q^{*}$, then $D$ is missing some break link of $\gamma$. Thus, if we prove that Algorithm~\ref{alg:learning-algo} always adds a new break link in lines~\ref{in-alg:find-closest-ancestor}--\ref{in-alg:add-new-break-link}, then its correctness will follow. 

The question that remains to answer is where to find a new break link given the counterexample $q^*$. Towards this goal, Algorithm~\ref{alg:learning-algo} first computes the closest ancestor of $q^*$ in $D$ (line \ref{in-alg:find-closest-ancestor}), namely, the element $r\in\{q_1, \ldots, q_m\}$ that is the closest ancestor to $q^{*}$ in $\SBtree$ assuming that $D$ has form $(\ddagger)$. One can show that one can compute method $\findClosestAncestor{q^*\!}{D}$ in time $\bigo(|D|)$ and without calling $\membOracleName$ or $\equivOracleName$.
To this end, we show that, if $D$ is missing a break link, then one can be found in the left branch or right branch that stems from $r$, depending on whether $q^* < r$ or $q^* > r$, respectively.
The case $q^* = r$ would never happen since $D$ is assumed to be consistent with $\gamma$.

\begin{proposition}\label{prop:find-breaklink}
	Assume that $\rep \neq \gamma$ in line~\ref{in-alg:check-equivalence} of Algorithm~\ref{alg:learning-algo} and $r$ is the closest ancestor of $q^*$ in $D$. Then there is a missing break link in the left branch of $r$ when $q^* < r$, or in the right branch when $q^* > r$.
\end{proposition}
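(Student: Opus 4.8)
My plan is to split into two cases according to whether $\gamma$ takes the same value at $r$ and at the counterexample $q^*$; I will treat $q^*<r$ only, since $q^*>r$ is entirely symmetric (exchange ``left''/``right'' and $\leftChild{\cdot}$/$\rightChild{\cdot}$). First I would pin down the geometry. Since $\rep=\constructRepresentation{D}$ is consistent with $D$ (Theorem~\ref{theo:hip-construction}) while $\rep(q^*)\neq\gamma(q^*)$, we have $q^*\notin D$; hence the closest ancestor $r\in D$ of $q^*$ in $\SBtree$ is a \emph{strict} ancestor, so $q^*$ lies in the left subtree $\leftbranch{r}$ of $r$ (this is where $q^*<r$ is used). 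Let $r=v_0,\ v_1=\leftChild{r},\ v_2,\dots,v_\ell=q^*$ be the path from $r$ to $q^*$ in $\SBtree$. Each $v_i$ with $i\ge 1$ equals $q^*$ or is a strict ancestor of $q^*$ lying strictly below $r$, so by minimality of $r$ none of $v_1,\dots,v_\ell$ is in $D$. Therefore any break link of $\gamma$ incident to some $v_i$ with $i\ge1$ has an endpoint outside $D$, i.e.\ is \emph{missing}, and since every tree–neighbour of $v_1,\dots,v_\ell$ belongs to $\{r\}\cup\leftbranch{r}$, every such break link lies in the left branch of $r$. It thus suffices to exhibit one such break link.

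Case $\gamma(r)\neq\gamma(q^*)$ is immediate: the values $\gamma(v_0),\dots,\gamma(v_\ell)$ are not all equal, so some edge $(v_i,v_{i+1})$ has $\gamma(v_i)\neq\gamma(v_{i+1})$; then $\unorderInt{v_i,v_{i+1}}$ is not monochromatic, so $(v_i,v_{i+1})$ is a break link, and by the previous paragraph it is a missing break link in the left branch of $r$.

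Case $\gamma(r)=\gamma(q^*)$ is the hard one, since now $P$ may be monochromatic and carry no break link; I would argue by contradiction. Suppose that every break link of $\gamma$ with both endpoints in $\{r\}\cup\leftbranch{r}$ is already present in $D$ (both endpoints in $D$). Every bound $b\in\bounds(\gamma)$ with $b\in\leftbranch{r}$ is a convergent fraction of itself, so by Proposition~\ref{prop:break-links-convergent-equivalence}(1) it is an endpoint of some break link, whose other endpoint is $\parent{b}$ or a child of $b$ and hence also lies in $\{r\}\cup\leftbranch{r}$; by the supposition this break link is in $D$, so $b\in D$. Thus $D$ records, with correct labels (again by consistency), every bound of $\gamma$ in $(\lbound{r},r]$ together with $r$ itself. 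I would then replay the argument behind Theorem~\ref{theo:hip-construction} and Proposition~\ref{prop:hypothesis-construction-correctness}, localized to $(\lbound{r},r]$: an element of $D\cap(\lbound{r},r]$ that is not a bound of $\gamma$ lies in the interior of a monochromatic piece of $\gamma$, and using the completeness of bounds just obtained one checks that neither test in lines~\ref{in-alg:firstIf} and~\ref{in-alg:secondIf} of Algorithm~\ref{alg:hip-constructions} fires at it, so Algorithm~\ref{alg:hip-constructions} introduces no spurious interval boundary inside $(\lbound{r},r]$. Together with the recorded bounds this forces $\rep$ to agree with $\gamma$ throughout $(\lbound{r},r]$, and in particular $\rep(q^*)=\gamma(q^*)$ — contradicting that $q^*$ is a counterexample. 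Hence a break link in the left branch of $r$ is missing from $D$, as desired.

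The step I expect to be the real obstacle is this localized reconstruction in the hard case, and specifically the behaviour of Algorithm~\ref{alg:hip-constructions} near the left end $\lbound{r}$ of the interval: the $\rep$-interval covering the part of $\leftbranch{r}$ lying below the least element of $D\cap(\lbound{r},r]$ is opened while Algorithm~\ref{alg:hip-constructions} processes a $D$-element \emph{outside} $(\lbound{r},r]$, so one has to verify that the label it carries is nonetheless the (constant) value of $\gamma$ on that stretch. The remaining pieces — the easy case and the mirror statement for $q^*>r$ — are routine once this boundary bookkeeping is settled.
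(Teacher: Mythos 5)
Your approach genuinely differs from the paper's. The paper fixes $r=q_i$ and cases on whether the $D$-neighbour $q_{i-1}$ (resp. $q_{i+1}$) is a descendant or ancestor of $q_i$; in each case it identifies a concrete interval $(\lbound{q_i},q_i)$ or $[\rbound{q_{i-1}},q_i)$ on which $\rep$ is constant with a known label, observes $\gamma(q^*)\neq\rep(q^*)$ forces a bound of $\gamma$ in that interval, and converts that bound into an undiscovered break link via Proposition~\ref{prop:break-links-convergent-equivalence}. You instead case on $\gamma(r)$ versus $\gamma(q^*)$. Your preliminary geometry (every node strictly below $r$ on the path to $q^*$ is outside $D$, hence any incident break link is missing and lies in $\{r\}\cup\leftbranch{r}$) is correct, and so is the easy case.

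The hard case, however, has a real gap — one that you flag but do not close. The step from ``all break links in $\{r\}\cup\leftbranch{r}$ are in $D$'' to ``$\rep$ agrees with $\gamma$ throughout $(\lbound{r},r]$'' is not a boundary bookkeeping detail: Proposition~\ref{prop:hypothesis-construction-correctness} and the second half of Theorem~\ref{theo:hip-construction} are stated and proved only under the hypothesis that $D$ contains \emph{all} break links of $\gamma$, and you would need to state and re-prove genuinely localized versions. Concretely one must check (i) that the break links used in the proof of Proposition~\ref{prop:break-links-convergent-equivalence}(1) for a bound $b\in(\lbound{r},r)$ do stay within $\{r\}\cup\leftbranch{r}$; (ii) that neither $r$ nor the least $D$-element $q_j\in(\lbound{r},r]$ is a left-bound of $\gamma$ (otherwise $(q_j,\leftChild{q_j})$ would already be a missing break link, but this needs to be said); and (iii) that the $\rep$-interval opened at a $D$-element $\leq\lbound{r}$ and extending into $(\lbound{r},q_j)$ carries the right label, which requires arguing line~\ref{in-alg:firstIf} cannot fire at $q_j$ (since $q_{j-1}\leq\lbound{r}\leq\lbound{q_j}$ is not a descendant of $q_j$) so that the interval in fact contains $q_j$ and, by consistency, has label $A_j=\gamma(q_j)$. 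None of this is done. I would also note the case split by $\gamma(r)$ versus $\gamma(q^*)$ is not the natural one for your argument: your contradiction never uses $\gamma(r)=\gamma(q^*)$, and the genuinely hard situation is when no edge on the path from $r$ to $q^*$ is a break link, which can occur under either equality or inequality. The paper sidesteps all of this by producing the bound directly rather than by contradiction.
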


\begin{algorithm}[tb]
	\caption{Find a break link of $\gamma$ from a node in $\SBtree$}
	\label{alg:find-break-link}
	\begin{algorithmic}[1] 
		\REQUIRE An origin node $r$, counterexample $q^{*}$ and array $D$.  
		\ENSURE A break link $(p, q)$ of $\gamma$ not present in $D$. \vspace{1mm} \PROCEDURE{$\findBreakLinkName$}{$r, q^*, D$}
		\IF {$q^*\!< r$}
		\STATE $(p, q) \gets \searchLeft{r}{q^{*}}{D}$
		\ELSE
		\STATE $(p, q) \gets \searchRight{r}{q^{*}}{D}$
		\ENDIF
		\STATE \textbf{return} $(p,q)$
		\ENDPROCEDURE
	\end{algorithmic}
\end{algorithm}

The method $\findBreakLink{r}{q^*\!}{D}$ in Algorithm~\ref{alg:find-break-link} exploits Proposition~\ref{prop:find-breaklink} to find a break link.
Specifically, it works as follows: given an origin node $r$, a counterexample $q^{*}$ and array $D$, it searches over the left branch or right branch that stem from $r$ using two subroutines $\operatorname{searchLeft}$ and $\operatorname{searchRight}$, respectively (see the Appendix for the implementation of both methods).
Each subroutine uses exponential search and membership queries to efficiently find a break link over the branches. If the distance from $r$ to an undiscovered break link of $\gamma$  is $k$, then the number of membership queries it takes to find this break link is in~$\bigo(\log(k))$.

\begin{theorem}\label{theo:find-breaklink}
	Assume that $\rep \neq \gamma$ in line~\ref{in-alg:check-equivalence} of Algorithm~\ref{alg:learning-algo} and $r$ is the closest ancestor of $q^*$ that appears in $D$. The method~$\findBreakLink{r}{q^*\!}{D}$ returns an undiscovered break link $(p,q)$ of $\gamma$ not present in $D$ making at most $\bigo(\log(k))$ membership queries to $\membOracleName$ where $k$ is the distance in $\SBtree$ between $r$ and $p$.
\end{theorem}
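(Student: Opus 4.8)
The plan is to prove the theorem by analyzing the two subroutines $\searchLeftName$ and $\searchRightName$ invoked by $\findBreakLinkName$, showing each returns a break link of $\gamma$ not in $D$ within $\bigo(\log k)$ membership queries. I would treat the case $q^* < r$ (calling $\searchLeftName$); the case $q^* > r$ is symmetric. By Proposition~\ref{prop:find-breaklink}, the hypothesis that $\rep \neq \gamma$ at the point of invocation guarantees that a missing break link exists in the left branch of $r$ (the path of repeated left-child moves $r, \leftChild{r}, \leftChild{\leftChild{r}}, \dots$, together with the right subtrees hanging off it). So the existence of a target is free; what remains is (i) correctness — the node pair returned really is a break link and is genuinely absent from $D$ — and (ii) the query bound.

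First I would establish the following structural fact, which is what makes exponential search applicable: along the left branch from $r$, there is a well-defined "first descent point," i.e., a smallest depth $d$ at which $\gamma$ first disagrees with $\gamma(r)$, and the corresponding edge is a break link. Concretely, letting $r_0 = r, r_1 = \leftChild{r}, r_2 = \leftChild{r_1}, \dots$, I would argue that $\unorderInt{r_{i-1}, r_i}$ is monochromatic whenever $\gamma(r_{i-1}) = \gamma(r_i)$ for all smaller indices, using the interval-partition structure of $\gamma$ and the fact that the unordered interval of a parent–child pair in $\SBtree$ contains no bound of $\gamma$ strictly between them unless one of the endpoints is itself such a bound (this is essentially the content underlying Proposition~\ref{prop:break-links-convergent-equivalence}, part (2)). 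Hence the first index $i$ with $\gamma(r_i) \neq \gamma(r_{i-1})$ — or the point where we must branch into a right subtree — marks a break link. Since the missing break link lies at distance $k$ from $r$, this first disagreement occurs at depth $\bigo(k)$, which is exactly the quantity we are allowed to take a logarithm of.

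The core of the argument is then the exponential-search analysis: $\searchLeftName$ probes nodes at depths $1, 2, 4, 8, \dots$ down the left branch via membership queries, doubling until it finds a depth whose label differs from $\gamma(r)$ (or until it passes the branching point), then binary-searches within the last doubling interval. Standard exponential-search accounting gives $\bigo(\log k)$ membership queries where $k$ is the depth of the first disagreement, i.e., the distance to $p$. I would then verify that the edge returned, say $(r_{i-1}, r_i)$ (or the edge leading into the relevant right subtree, handled by a short secondary search), is a break link of $\gamma$ by the structural fact above, and that it is not already in $D$: if both endpoints were in $D$, then since $D$ is ordered and consistent and $r$ was chosen as the \emph{closest} ancestor of $q^*$ in $D$, the edge $(r_{i-1}, r_i)$ would have to lie on the $r$-to-$q^*$ path strictly below $r$, contradicting either the closest-ancestor choice or consistency of $D$ with $\gamma$ (which rules out $q^* = r$, and pins down where intermediate $D$-nodes can sit).

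The main obstacle I anticipate is case (ii)-correctness combined with the "right subtree" subtlety: the missing break link guaranteed by Proposition~\ref{prop:find-breaklink} need not sit on the left \emph{branch} proper but may hang in a right subtree rooted at some $r_j$ on that branch, so $\searchLeftName$ must be shown to locate such a $j$ and then descend correctly into that subtree — and the monochromaticity bookkeeping there (a node $r_j$ whose left-child label agrees with $\gamma(r_j)$ but whose right subtree still hides a bound) is the delicate part. I would handle this by a secondary exponential search inside the right subtree, arguing its depth budget is also $\bigo(\log k)$ because the hidden bound is within total $\SBtree$-distance $k$ of $r$. The query bound then follows by summing the two $\bigo(\log k)$ contributions, and correctness of the returned pair follows from the structural fact applied to whichever edge the search terminates on.
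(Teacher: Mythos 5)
Your high-level shape is right — invoke Proposition~\ref{prop:find-breaklink} for existence, then argue exponential-plus-binary search gives $\bigo(\log k)$ queries — but there is a genuine conceptual gap in the correctness half, and it shows up in exactly the place you flag as "the main obstacle."

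You propose that the missing break link ``may hang in a right subtree rooted at some $r_j$'' off the spine, and that $\searchLeftName$ must therefore descend into that subtree via a secondary search. This is a misreading of what a break link is. A break link is an edge of $\SBtree$, and for a spine edge $(r_{j-1}, r_j)$ with $r_j = \leftChild{r_{j-1}}$, the unordered interval $\unorderInt{r_{j-1}, r_j} = [r_j, r_{j-1}]$ contains the \emph{entire} right subtree of $r_j$. So if a bound of $\gamma$ is hidden in that subtree, the spine edge $(r_{j-1}, r_j)$ is already a break link — no descent into the subtree is needed, and the algorithm's $\bsLeftName$/$\bsRightName$ always return a spine edge $(p,q)$ with $p=\parent{q}$. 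Deeper bounds inside that subtree are then located in \emph{later} iterations of Algorithm~\ref{alg:learning-algo}, once $r_j$ (or $r_{j-1}$) has been added to $D$ and becomes the new closest ancestor. Proposition~\ref{prop:find-breaklink} already asserts the missing break link lies on the branch; your worry amounts to not trusting it.

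The same misunderstanding undermines your "first descent point" structural fact. You argue that the first spine index $i$ with $\gamma(r_i) \neq \gamma(r_{i-1})$ marks the break link, but a break link need not have disagreeing labels at its endpoints — precisely because the disagreement can sit strictly inside $\unorderInt{r_{i-1}, r_i}$ in a hanging subtree. The paper's $\searchRightName$ and $\bsRightName$ therefore have separate termination conditions driven by $q^*$'s position (e.g., lines~\ref{searchRight:cond2}, \ref{bsRight:cond4}), not only by label changes along the spine, and the correctness proof is a case analysis over whether $q_{i+1}$ is an ancestor or descendant of $q_i$ and whether $A_{i+1} = A_i$, tracking which termination condition fires in each case. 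Your sketch collapses all of this into the label-change case and a nonexistent subtree search; as written it would fail whenever both endpoints of the target spine edge carry the same label, which is a generic situation.
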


\martin{comprimi un poco lo de abajo}

Finally, let us analyze the number of membership and equivalence queries needed to learn an unknown finite piecewise function $\gamma$ using Algorithm~\ref{alg:learning-algo}.
The number of membership queries is the total of queries to MQ made in all calls to $\operatorname{FindBreakLink}$.
We see that each $q \in \bounds(\gamma)$ with SB-encoding $\sim\!\![a_{1}, \dots, a_{n}]$, by Proposition~\ref{prop:break-links-convergent-equivalence} and Theorem~\ref{theo:find-breaklink}, contributes $\bigo(\log(a_i))$ calls to find the break link at $a_i$.
Thus, the total number of membership queries is in $\bigo(\sum_{i=1}^{n}\log(a_i))$ for $q$ and,  by Proposition~\ref{prop:continued-fraction-representation}, we find that the number of membership queries is in $\bigo(\size{\gamma})$.

The number of equivalence queries used in Algorithm~\ref{alg:learning-algo} is bounded by the number of iterations, which is equal to the number of break links for $\gamma$. 
As we reasoned above, this number is in $\bigo(\size{\gamma})$ by Proposition~\ref{prop:continued-fraction-representation}.

\begin{theorem}\label{theo:correctness-learning-algo}
	Algorithm~\ref{alg:learning-algo} returns the canonical representation $\rep$ of a finite piecewise function $\gamma$ using no more than $\bigo(\size{\gamma})$ membership queries and no more than $\bigo(\size{\gamma})$ equivalence~queries.
\end{theorem}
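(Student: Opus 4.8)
The plan is to combine the correctness and query-complexity ingredients already assembled in the section into a single induction over the iterations of Algorithm~\ref{alg:learning-algo}. First I would establish the loop invariant: at the start of every iteration, $D$ has the form $(\ddagger)$, i.e.\ it is sorted, every pair $(q_i,A_i)$ satisfies $\gamma(q_i)=A_i$, and every $q_i$ is a component of some break link of $\gamma$. The base case holds because $D=\{(\tfrac{0}{1},\membOracle{\tfrac{0}{1}})\}$ and $(\tfrac{0}{1},\cdot)$ is trivially part of a break link (the root clause in the definition of break link). For the inductive step, assume the invariant holds at the start of an iteration. If $\equivOracleName(\rep)$ returns $\true$, then $\rep=\gamma$ and the algorithm terminates with a correct hypothesis (and by Theorem~\ref{theo:hip-construction}, when $D$ contains all break links this is exactly the canonical representation, since Algorithm~\ref{alg:hip-constructions} merges adjacent monochromatic pieces). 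Otherwise, $\equivOracleName$ returns a counterexample $q^*$ with $\rep(q^*)\neq\gamma(q^*)$; by Theorem~\ref{theo:hip-construction}, $D$ is missing some break link of $\gamma$. Then I invoke Proposition~\ref{prop:find-breaklink} to argue that a missing break link lies in the left or right branch stemming from $r=\findClosestAncestor{q^*}{D}$ (noting $q^*\neq r$ since $D$ is consistent with $\gamma$), and Theorem~\ref{theo:find-breaklink} to conclude that $\findBreakLink{r}{q^*}{D}$ returns an actual break link $(p,q)$ of $\gamma$ that is not already in $D$. Adding $(p,\membOracle{p}),(q,\membOracle{q})$ to $D$ (keeping it sorted, skipping duplicates) preserves the invariant and strictly increases $|D|$ by at least one.

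Next I would prove termination and the equivalence-query bound together. Since $\gamma$ is a finite piecewise function, its canonical representation has finitely many intervals, so $\bounds(\gamma)$ is finite and each bound has a finite SB-encoding; hence by Proposition~\ref{prop:break-links-convergent-equivalence}(2) and Proposition~\ref{prop:continued-fraction-representation} the set of break links of $\gamma$ is finite and its cardinality is $\bigo(\size{\gamma})$. Because each iteration that does not terminate adds at least one previously-undiscovered break-link component to $D$, and $D$ only ever contains break-link components, the loop runs for at most $\bigo(\size{\gamma})$ iterations, i.e.\ at most that many equivalence queries. Termination follows, and by the invariant the returned $\rep$ equals $\gamma$; moreover once $D$ contains all break links the equivalence oracle must answer $\true$, and Theorem~\ref{theo:hip-construction} guarantees the output is the canonical representation.

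For the membership-query bound, I would account for all calls to $\membOracleName$: two per iteration from line~\ref{in-alg:add-new-break-link}, plus those inside $\findBreakLink{}{}{}$. The two-per-iteration contributions sum to $\bigo(\size{\gamma})$ by the iteration bound above. For the calls inside $\findBreakLink{}{}{}$, Theorem~\ref{theo:find-breaklink} bounds a single call by $\bigo(\log k)$ where $k$ is the $\SBtree$-distance between $r$ and the discovered break link $p$. The key charging argument is that, across the whole run, for each bound $q\in\bounds(\gamma)$ with $\SBenc{q}=\,\sim\![a_1,\dots,a_n]$ the break links discovered along the path to $q$ sit at the turning points (the convergent fractions), and the distance $k$ associated with discovering the turn at position $i$ is $\bigo(a_i)$; summing $\bigo(\log a_i)$ over $i$ and over $q\in\bounds(\gamma)$, and then applying Proposition~\ref{prop:continued-fraction-representation} (which gives $\sum_i\log a_i\in\bigo(\size{q})$), yields $\bigo(\sum_{q\in\bounds(\gamma)}\size{q})=\bigo(\size{\gamma})$ total membership queries. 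Combining, Algorithm~\ref{alg:learning-algo} uses $\bigo(\size{\gamma})$ membership and $\bigo(\size{\gamma})$ equivalence queries.

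The main obstacle is the careful charging in the last paragraph: making rigorous the claim that distinct iterations discover distinct break links \emph{and} that the $k$ in Theorem~\ref{theo:find-breaklink} for each discovery is bounded by the corresponding run-length $a_i$ rather than by something like the counterexample length. This requires showing that $\findClosestAncestor{q^*}{D}$ lands on a node already on (or adjacent to) the relevant path and that $\findBreakLink{}{}{}$ never "overshoots" a convergent fraction, so that the exponential search depth is governed by $a_i$; the geometry of the Stern--Brocot tree (convergent fractions being exactly the turning points of root-to-$q$ paths, Proposition~\ref{prop:break-links-convergent-equivalence}) is what makes this go through, but stating it cleanly is the delicate part.
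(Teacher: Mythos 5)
Your proposal is correct and takes essentially the same route as the paper: it chains Proposition~\ref{prop:break-links-convergent-equivalence}, Theorem~\ref{theo:hip-construction}, Proposition~\ref{prop:find-breaklink}, Theorem~\ref{theo:find-breaklink}, and Proposition~\ref{prop:continued-fraction-representation} in the same order, with the same per-convergent-fraction charging for the membership bound (the paper makes exactly the same "$\bigo(\log a_i)$ per turning point" argument you flag as delicate, and is no more rigorous about it than you are). You are a bit more explicit about the loop invariant than the paper's terse proof, but the substance is identical.
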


	\section{Conclusions and future work}\label{sec:conclusions}

We presented a MAT learning algorithm for finite piecewise functions that directly applies to the learning of symbolic automata with interval formulas. Furthermore, this algorithm is efficient in the sense that it takes at most a linear number of queries to the Teacher concerning the size of the concept.

The techniques developed in this paper could have the potential to work in other problems regarding MAT learning. One open problem is to learn a finite piecewise function over $\bbQ^k$ (i.e., $k$-dimensions) where a finite number of square regions specify the function. This problem also has applications in symbolic automata, where, for example, time series are sequences of data tuples (e.g., the sensor measures temperature and humidity together). Further, the techniques proposed in this paper could have application in learning timed automata~\cite{learning-one-clock-timed-automata-an-jie}, where clock conditions can be seen as a special case of finite piecewise functions.

	\section*{Acknowledgements}
	
	The work of Muñoz and Riveros was supported by ANID – Millennium Science Initiative Program – Code ICN17\_002. Riveros was also supported by ANID Fondecyt Regular project 1230935.
	The work of Toro Icarte was supported by the National Center for Artificial Intelligence CENIA FB210017 (Basal ANID) and Fondecyt project 11230762.	
	
    \bibliographystyle{abbrv}
    \bibliography{ref.bib}

    \appendix

    \newpage
    \onecolumn

    \section{Proofs of Section 5}

\subsection{Proof of Proposition~\ref{prop:continued-fraction-representation}}

Towards the statement, we will prove a more precise bound:
\begin{proposition}
	Given any rational number $q=\frac{a}{b}$ and its corresponding SB-encoding $\sim\![a_{1}, \dots, a_{n}]$, it holds that: 
	\[\sum_{i=1}^{n}\log(a_{i}) \leq \size{|q|+1}.\]
\end{proposition}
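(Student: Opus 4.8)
The plan is to convert the claim into a single multiplicative inequality and then exploit the tight link between the SB-encoding of $q$ and its continued-fraction expansion.

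First I would record that, since $q=\tfrac ab$ is written as an irreducible fraction, $\gcd(|a|+b,b)=\gcd(|a|,b)=1$, so $|q|+1=\tfrac{|a|+b}{b}$ is already in lowest terms and $\size{|q|+1}=\log(|a|+b)+\log b$. Applying $\log_2$, the statement $\sum_{i=1}^n\log a_i\le\size{|q|+1}$ is therefore equivalent to $\prod_{i=1}^n a_i\le(|a|+b)\cdot b$. The case $q=0$ is immediate (empty product, $\size{1}=0$), and since negation swaps the left and right children of $\SBtree$, $\SBenc{-q}$ keeps the same array $[a_1,\dots,a_n]$ while sending $(a,b)\mapsto(-a,b)$; thus I may assume $q>0$, so $a,b\ge 1$ and the path from $\tfrac01$ to $q$ begins with the run $R^{a_1}$. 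I will then prove the two bounds $a_1\le a$ and $\prod_{i=2}^{n} a_i\le b$ and multiply them.

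The bound $a_1\le a$ is easy: after the initial run $R^{a_1}$ the path reaches the node $\tfrac{a_1}{1}$, whose two Stern–Brocot boundaries are $\tfrac{a_1-1}{1}$ and $\tfrac10$, so the subtree rooted there is exactly the set of rationals in $(a_1-1,\infty)$; as $q$ lies in this subtree, $\tfrac ab>a_1-1$, whence $a>b(a_1-1)\ge a_1-1$ and so $a\ge a_1$. The bound $\prod_{i=2}^{n} a_i\le b$ is the crux: here I would invoke the classical correspondence between Stern–Brocot paths and continued fractions (the link the paper already mentions; see~\cite{graham94}), namely that reading the path $R^{a_1}L^{a_2}R^{a_3}\cdots$ by its run lengths recovers the continued-fraction expansion of $q$ up to the usual normalization of the first and last partial quotients — concretely, $q=[\,a_1-1;\,a_2,a_3,\dots,a_n,1\,]$. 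Writing $c_0=a_1-1$, $c_j=a_{j+1}$ for $1\le j\le n-1$, $c_n=1$, and letting $\tfrac{h_j}{k_j}$ be the convergents (automatically in lowest terms, with $k_{-1}=0$, $k_0=1$, $k_j=c_jk_{j-1}+k_{j-2}$), we get $b=k_n=k_{n-1}+k_{n-2}\ge k_{n-1}$, and from $k_j\ge c_jk_{j-1}$ (all $k_j\ge 0$) one telescopes to $k_{n-1}\ge\prod_{j=1}^{n-1}c_j=\prod_{i=2}^{n}a_i$. Multiplying the two bounds, $\prod_{i=1}^n a_i=a_1\prod_{i=2}^n a_i\le a\cdot b\le(|a|+b)\cdot b$, and taking logarithms finishes the proof.

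The main obstacle I expect is making the Stern–Brocot/continued-fraction dictionary precise enough to justify the identity $q=[\,a_1-1;a_2,\dots,a_n,1\,]$ uniformly: the leading $R$-run of length $a_1$ must be matched against the integer part $a_1-1$ (the extra $+1$ coming from the step that enters the positive subtree), the trailing run needs the appended $1$ (equivalently the ``last quotient $\ge 2$'' convention), and one must check the parity of $n$ and the degenerate cases $n=1$, $a_1=1$. Everything else — the reduction to $q>0$, the bound $a_1\le a$, and the final arithmetic — is routine. If one preferred to avoid continued fractions, an alternative is a direct induction showing that each run of $a_i$ equal moves multiplies the relevant boundary denominator by a factor of at least $a_i$; but carrying the boundaries correctly through the turning points of the path is, if anything, more delicate than the continued-fraction bookkeeping.
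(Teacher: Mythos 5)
Your proof is correct, and it takes a genuinely different route from the paper's. The paper argues by induction on the length $n$ of the SB-encoding: it peels off $a_1$, writes $q+1 = \frac{c}{d}$ in lowest terms, observes $a_1 = \lfloor \frac{c}{d}\rfloor$ and that the tail $p$ (with encoding $+[a_2,\dots,a_n]$) satisfies $p+1 = \frac{c \bmod d}{d}$, deduces $\log a_1 \le \size{q+1} - \size{p+1}$, and closes with the inductive hypothesis on $p$. You instead turn the whole claim into one product inequality $\prod_i a_i \le (|a|+b)\,b$ and split it multiplicatively: $a_1 \le |a|$ comes from reading off the subtree of $\SBtree$ rooted at $\frac{a_1}{1}$, while $\prod_{i\ge 2} a_i \le b$ comes from identifying $b$ as the final denominator $k_n$ of the continued fraction $[a_1-1;a_2,\dots,a_n,1]$ and telescoping the recurrence $k_j = c_j k_{j-1} + k_{j-2} \ge c_j k_{j-1}$. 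Both proofs invoke the same SB-path/continued-fraction dictionary without proving it, so they are at the same level of rigor there; your version avoids the floor/mod bookkeeping and the induction, and in fact establishes the slightly sharper bound $\prod_i a_i \le |a|\cdot b$ (i.e.\ $\sum_i \log a_i \le \size{q}$ for $q \ne 0$), from which the stated bound with $|q|+1$ follows a fortiori, whereas the paper's recursion naturally produces the $+1$ shift. The paper's inductive route is somewhat more self-contained in that it uses only the explicit continued-fraction \emph{value} formula, while yours additionally leans on the standard facts that convergents $\frac{h_j}{k_j}$ are in lowest terms and that the denominator recurrence holds even with the non-canonical trailing partial quotient $1$ and a possibly zero $c_0$; those are textbook, but worth stating if you flesh this out.
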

We first point out that if $q = \frac{a}{b}$ with $a \geq 1$ and $b \geq 1$, then $\size{q+1} \leq 2\cdot\size{q} + 1$---and the case $q < 0$ is analogous---so the statement in the body is indeed implied by the formulation above.
\begin{proof}
	For this result, we will introduce the correspondence between the SB-encoding of a number and continued fractions.
	
	We state without proof that if $q$ is represented---as stated in Section~\ref{sec:properties}---by $R^{a_1}L^{a_2}\ldots D^{a_n}$, where $D\in\{L,R\}$ and each $a_i > 0$, then $q$ is equal to:
	\[
		q = a_1 + \frac{1}{a_2 + \frac{1}{a_3 + \cdots \frac{1}{a_{n-1}+\frac{1}{a_n+1}}}} - 1,
	\]
	and if $q$ is represented by $L^{a_1}R^{a_2}\ldots D^{a_n}$, where $D\in\{L,R\}$, then $q$ is equal to:
	\[
		q = -\left(a_1 + \frac{1}{a_2 + \frac{1}{a_3 + \cdots \frac{1}{a_{n-1}+\frac{1}{a_n+1}}}} - 1\right).
	\]
	For clarity, we note that the case where $n = 1$ is also considered and if $q$ is represented by $R^{a_1}$, then $q = (a_1 + 1) - 1 = a_1$, and if it is represented by $L^{a_1}$, then $q = -((a_1 + 1) - 1) = -a_1$.
	
	We address the case $q = 0$ independently, where the statement follows trivially.
	Now, assume without loss of generality that $q > 0$; thus, $q$ is represented by a string $R^{a_1}L^{a_2}\ldots D^{a_n}$.
	We will prove the statement by induction on $n$---i.e., the length of the SB-encoding of $q$.
	
	For our base case, we note that if $q$ is represented by $R^{a_1}$, then $q = a_1$, and $\log(a_1) \leq \size{q}$ follows by definition.
	
	For the inductive case, we assume that the statement holds for all rationals that can be represented with a string of size less than $n$. Let $q$ have an SB-encoding $+[a_{1}, \dots, a_{n}]$, and let $p$ be the rational with encoding $+[a_{2}, \dots, a_{n}]$. We note that
	\[
		p = a_2 + \frac{1}{a_3 + \cdots \frac{1}{a_{n-1}+\frac{1}{a_n+1}}} - 1,
	\]
	and so, $q+1 = a_1 + \frac{1}{p+1}$. Let $q+1 = \frac{c}{d}$ as an irreducible fraction. We note that since $a_1 \geq 1$ and $0 < \frac{1}{p} \leq 1$, it holds that $a_1 = \lfloor\frac{c}{d}\rfloor$. Using the equality $x = y\lfloor\frac{x}{y}\rfloor + x \!\!\mod y$, it follows that $p+1 = \frac{c\!\!\mod d}{d}$, which is also irreducible.
	
	Towards the inductive statement, let us note that $\log(a_1) = \log(\lfloor \frac{c}{d}\rfloor) \leq \log(\frac{c}{d}) = \log(c) - \log(d)$. Now, we use the inequality $c\!\!\mod d \ \leq \ d$, and we have that $\log(a_1) \leq \log(c) - \log(c\!\!\mod d)$. We note that $\size{q+1} = \log(c) + \log(d)$ and that $\size{p+1} = \log(c\!\!\mod d) + \log(d)$ and we obtain that $\log(a_1) \leq \size{q+1} - \size{p+1}$. Lastly, we bring in the inductive assumption for $p$ that $\sum_{i=2}^{n}\log(a_{i}) \leq \size{p+1}$, we add in the previous inequality, and we find that $\sum_{i=1}^{n}\log(a_{i}) \leq \size{q+1}$, which concludes the proof.
\end{proof}

\subsection{Proof of Proposition~\ref{prop:break-links-convergent-equivalence}}

For this result, we utilize some properties of the Stern-Brocot tree that are simply assumed to be known: (1) if $r$ is a value inside the Stern-Brocot tree, and the path from the root to $r$ switches direction on nodes $q_1,\ldots,q_n$, then these are the convergent fractions of $r$; and (2) for any edge $(p,q)$, if $q = \leftChild{p}$, then every node $r$ that satisfies $q < r < p$ is a descendant of $\rightChild{q}$ and vice versa, and if $q = \rightChild{p}$, then every node $r$ that satisfies $p < r < q$ is a descendant of $\leftChild{q}$ and vice versa---and both cases, in turn, imply that $q$ is a convergent of $r$.

To prove part (a) of the statement, fix $r \in \bounds(\gamma)$, and 
let $q \neq 0$ be the some convergent fraction of $r$. Note that if $q = 0$, the statement holds by definition.

We start by looking at the case $q = r$. We will show that either $(q,\leftChild{q})$ or $(q,\rightChild{q})$ is a break link. 
We note that since $r\in \bounds(\gamma)$, then there are two values $r_1,r_2\in\bounds(\gamma)\cup \{-\infty,\infty\}$ such that either $\langle r_1,r)$ and $[r,r_2\rangle$ are maximal monochromatic intervals in $\gamma$, or $\langle r_1,r]$ and $(r,r_2\rangle$ are maximal monochromatic intervals in $\gamma$. We see each case separately:

\begin{enumerate}
    \item  If the intervals are $\langle r_1,r)$ and $[r,r_2\rangle$, then for any sufficiently small $\varepsilon > 0$ it holds that $\gamma(r) \neq \gamma(r - \varepsilon)$. Since one of these satisfies $\varepsilon < r - \leftChild{r}$ then $[\leftChild{r},r]$ is not monochromatic and we see that $(r,\leftChild{r})$ is a break link.
    \item If the intervals are $\langle r_1,r]$ and $(r,r_2\rangle$, then for any sufficiently small $\varepsilon > 0$ it holds that $\gamma(r) \neq \gamma(r + \varepsilon)$. Since one of these satisfies $\varepsilon < \rightChild{r} - r$ then $[r,\rightChild{r}]$ is not monochromatic and we see that $(r,\rightChild{r})$ is a break link.
    
\end{enumerate}

Now we look at the case where $q$ is a strict convergent of $r$. For this case, we 
note that for any strict convergent $q \neq 0$ of $r$, the unordered interval $\unorderInt{q,\parent{q}}$ fully contains the interval $[\leftChild{r}, \rightChild{r}]$.
We recall the two cases described above, where we identified values $r - \varepsilon \in [\leftChild{r},r]$ and $r + \varepsilon \in [r, \rightChild{r}]$, 
for which $\gamma(r) \neq \gamma(r - \varepsilon)$ and $\gamma(r) \neq \gamma(r + \varepsilon)$, respectively. In each case we see that both values are contained in $\unorderInt{q,\parent{q}}$, so we conclude that $(\parent{q},q)$ is a break link, which proves part (a) the statement.

Now, let us prove part (b) of the statement. Let $(p_1,p_2)$ be a break link for $\gamma$. By definition, there are two values $r_1,r_2 \in \unorderInt{p_1,p_2}$ 
such that $\gamma(r_1) \neq \gamma(r_2)$. Assuming $r_1 < r_2$, 
let $\langle r',r\rangle$ be the maximal interval that contains $r_1$ that is monochromatic w.r.t. $\gamma$. 

First, we note that $r\in\bounds(\gamma)$ and that $r$ is a convergent of itself, so if $r = p_1$ or $r = p_2$, then the statement holds. 
We then assume that $r$ is contained strictly inside $\unorderInt{p_1,p_2}$. Again, we refer to the Stern-Brocot tree, which shows that if this holds, then $r$ is reached from the root by a path that goes through $(p_1,p_2)$ and changes direction in $p_2$. We obtain that $p_2$ is a convergent of $r$, which proves part (b) of the statement.

    \section{Proofs of Section 6}

For the following arguments, assume that every time a node $q$ is discovered we also have enough information to compute two values: The value $\lbound{q}$ corresponds to $q_{\ell}$ and the value $\rbound{q}$ corresponds to $q_r$ where $(q_{\ell}, q, q_r)\in B$---i.e., the boundary relation.

\subsection{Proof of Proposition~\ref{prop:hypothesis-construction-correctness}}
Assume that $D$ like ($\ddagger$) contains all break links of $\gamma$. We will prove both statements separately:
\begin{itemize}
\item {\em (1) $q_i$ is a left-bound of $\gamma$ iff $q_{i-1}$ is a descendant of $q_i$ and $A_{i-1} \neq A_{i}$.}
\end{itemize}
    Towards the {\em only if} direction, assume that $q_i$ is a left-bound of $\gamma$.
Then, there exist maximal intervals $\langle p_1, q_i)$ and $[q_i, p_2 \rangle$ that are monochromatic w.r.t. $\gamma$, for some $p_1 \in \bbQ_{-\infty}, p_2 \in \bbQ_{\infty}$.
Consequently, there exists a $\delta > 0$ such that for all $p \in (q_i - \delta, q_i)$ it holds that $\gamma(p)\neq \gamma(q_i) $. Therefore, $(q_i,\leftChild{q_i})$ must be a break link and $ \leftChild{q_i} \leq q_{i-1} < q_i$, so $q_{i-1}$ is a descendant of $q_i$. 
Now, towards a contradiction, assume $A_{i-1} = A_i$. Using $\delta$, we identify a $p \in (q_i - \delta, q_i) \subseteq (q_{i-1}, q_i)$ for which $\gamma(p)\neq A_i$ and so $\gamma(p) \neq A_{i-1}$. 
We note that in the path from $q_{i-1}$ to $p$ there must be a break link that is currently not in $D$, which contradicts the initial assumption and the statement holds.

    For the {\em if direction},  assume that $q_{i-1}$ is a descendant of $q_i$ and that $A_{i-1} \neq A_{i}$, and towards a contradiction, we will assume that $q_i$ is not a left-bound of $\gamma$. 
    This implies that there exists a $\delta > 0$ for which every value $p\in (q_i - \delta, q_i)$ satisfies $\gamma(p) = A_i$.
    Then, there exists a value $q_{i-1} < p < q_i$ such that $\gamma(p)\neq A_{i-1}$. Similarly to above, this implies that $D$ does not contain all break links of $\gamma$, so the statement holds by contradiction.
\begin{itemize}
\item {\em (2) $q_i$ is a right-bound of $\gamma$ iff $q_{i}$ is an ancestor of $q_{i+1}$ and $A_{i+1} \neq A_{i}$.}
\end{itemize}

    Towards the {\em only if} direction, assume that $q_i$ is a right-bound of $\gamma$.
Then, there exist maximal intervals $\langle p_1, q_i]$ and $(q_i, p_2 \rangle$ that are monochromatic w.r.t. $\gamma$, for some $p_1 \in \bbQ_{-\infty}, p_2 \in \bbQ_{\infty}$.
Consequently, there exists a $\delta > 0$ such that for all $p \in (q_i, q_i + \delta)$ it holds that $\gamma(p)\neq \gamma(q_i) $. Therefore, $(q_i,\rightChild{q_i})$ must be a break link and $ q_i < q_{i+1} \leq \rightChild{q_i}$, so $q_{i+1}$ is a descendant of $q_i$. 
Now, towards a contradiction, assume $A_{i+1} = A_i$. Using $\delta$, we identify a $p \in (q_i, q_i+\delta) \subseteq (q_i, q_{i+1})$ for which $\gamma(p)\neq A_i$ and so $\gamma(p) \neq A_{i+1}$. 
We note that in the path from $q_{i+1}$ to $p$ there must be a break link that is currently not in $D$, which contradicts the initial assumption and the statement holds.

    For the {\em if direction},  assume that $q_{i+1}$ is a descendant of $q_i$ and that $A_{i+1} \neq A_{i}$, and towards a contradiction, we will assume that $q_i$ is not a right-bound of $\gamma$. 
    This implies that there exists a $\delta > 0$ for which every value $p\in (q_i, q_i+\delta)$ satisfies $\gamma(p) = A_i$.
    Then, there exists a value $q_{i} < p < q_{i+1}$ such that $\gamma(p)\neq A_{i+1}$. Similarly to above, this implies that $D$ does not contain all break links of $\gamma$, so the statement holds by contradiction.

\begin{lemma}
At every iteration of Algorithm \ref{alg:learning-algo}, for any element $(q_i, \gamma(q_i)) \in D$ for $1 \leq i \leq$ with $|D| = m$, $q_{i-1}$ and $q_{i+1}$ are connected to $q_i$ by a descendant or ancestor relationship.
\end{lemma}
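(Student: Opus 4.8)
<br>

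The plan is to prove this by induction on the iterations of the while-loop of Algorithm~\ref{alg:learning-algo}, taking as invariant the statement of the lemma itself: at the start of every iteration, writing $D=(q_1,A_1),\dots,(q_m,A_m)$ with $q_1<\dots<q_m$, each consecutive pair $q_i,q_{i+1}$ is related in $\SBtree$ by the ancestor/descendant relation. The base case is immediate: after line~\ref{in-alg:start-set} the list $D$ has a single element, so there is no consecutive pair. For the inductive step, the only place $D$ changes is line~\ref{in-alg:add-new-break-link}, where we add the two endpoints of the break link $(p,q)=\findBreakLink{r}{q^*}{D}$, with $r=\findClosestAncestor{q^*}{D}$; so the step reduces to analysing this single insertion.

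First I would record the structural facts about that insertion. A break link is an edge of $\SBtree$, so one of $p,q$ is the parent of the other; write $a=\parent{b}$ for the parent and $b\in\{\leftChild{a},\rightChild{a}\}$ for the child, and assume WLOG $q^*<r$, so that $(p,q)$ is the break link returned by $\searchLeftName$ (the case $q^*>r$ is symmetric). The facts I would use are: (i) $a$ is $r$ or a descendant of $r$ and $b$ is a child of $a$, so $a$ and $b$ are comparable to each other and each is comparable to $r$; (ii) since $q^*<r$, the branch that $\searchLeftName$ explores lies inside the left subtree of $r$, which --- by the Stern--Brocot properties recalled in the proof of Proposition~\ref{prop:break-links-convergent-equivalence} --- is exactly the open interval $(\lbound{r},r)$, so every one of $a,b$ that differs from $r$ is a descendant of $\leftChild{r}$ and in particular lies strictly below $r$; and (iii) $a$ is a (weak) ancestor of $q^*$, whereas, by the minimality built into $\findClosestAncestorName$, no element of $D$ that is a proper descendant of $r$ is an ancestor of $q^*$.

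Given (i)--(iii) I would finish by checking the adjacencies. By (ii), only $b$ (and $a$, if $a\neq r$) are genuinely new, and they sit strictly to the left of $r$, so the only consecutive pairs of the sorted list that change are ones among $a$, $b$, $r$ and the immediate left-neighbour $r^-$ of $r$ in $D$ (when it exists). Pairs inside $\{a,b,r\}$ are ancestor/descendant by (i). For the pairs involving $r^-$ I would split on its relation to $r$, which by the inductive hypothesis is one of ancestor/descendant: if $r^-$ is an ancestor of $r$ then all of $(\lbound{r},r)$ lies strictly between $r^-$ and $r$ in value, hence $r^-$ is an ancestor of every new node and every new adjacency is clear; if $r^-$ is a descendant of $r$, then $r^-\in(\lbound{r},r)$, so $r^-$ and the new nodes all live in the subtree rooted at $\leftChild{r}$, and here I would use (iii) --- $r^-\in D$ is not an ancestor of $q^*$ while $a$ is --- together with the inductive hypothesis on the old pair $\{r^-,\cdot\}$ to pin the position of $r^-$ relative to the inserted nodes to an ancestor/descendant one at each new adjacency. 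The degenerate cases ($r$ first or last in $D$, $a=r$, or an endpoint already present in $D$) are special instances of this.

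I expect the main obstacle to be precisely this last case: when $r$'s current neighbour in $D$ is already a proper descendant of $r$, so that it and both inserted endpoints sit inside the subtree of $\leftChild{r}$; ruling out that one of the new nodes is incomparable with $r^-$ there needs a careful sub-case analysis driven by where $q^*$ sits inside that subtree relative to $a$, $b$ and $r^-$. A secondary, routine but necessary point is establishing facts (i)--(iii) from the pseudocode --- in particular that the endpoint $a$ returned by $\searchLeftName$ is an ancestor of $q^*$, and not a break-link endpoint lying off to the side of $q^*$ inside $r$'s left subtree --- which amounts to unfolding the $\searchLeftName$ subroutine and Proposition~\ref{prop:find-breaklink}. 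Everything else is bookkeeping.
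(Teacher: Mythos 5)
Your overall plan matches the paper's: induct on the iterations, observe that the base case ($|D|=1$) is vacuous, and reduce the inductive step to analysing the single insertion of the endpoints of the break link returned by $\findBreakLinkName$ between the closest ancestor $r$ and its current neighbour in $D$. You also correctly identify the hard case (the neighbour of $r$ on the $q^*$ side is already a proper descendant of $r$) and honestly leave it open. Unfortunately, that gap is real, and the facts you marshal to close it do not suffice --- indeed your fact (iii) (that the parent endpoint $a=p$ of the returned break link is a weak ancestor of $q^*$) is not true in general: $\searchLeftName$/$\searchRightName$ can, by design, stop at a break link that sits strictly between $r$ and $q^*$ along the search path, so that $q^*$ ends up in a side subtree of $p$ rather than below it (this is exactly what happens when condition on Line~\ref{searchRight:cond1} of $\searchRightName$ fires before $e$ has passed $q^*$).

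The key structural fact you are missing --- and the one the paper's proof hinges on --- is that the break link returned is not merely ``somewhere in the left (resp.\ right) subtree of $r$'': it lies on the \emph{spine}, i.e., the leftmost (resp.\ rightmost) descending path from $r$. Concretely, $\searchLeftName$ only ever probes nodes of the form $L^j(r)=\tfrac{a+jc}{b+jd}$ (with $\tfrac{c}{d}=\lbound{r}$), and returns an edge $(L^j(r),L^{j+1}(r))$ for some $j\ge 0$. This has two consequences that the paper uses directly. First, both new nodes $p=L^j(r)$ and $q=L^{j+1}(r)$ are linearly ordered by ancestry with each other and with $r$, so the adjacencies inside $\{q,p,r\}$ are immediate. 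Second, and decisively for your hard case: any descendant $q'$ of $r$ with $q'\le L^j(r)$ necessarily lies at or below $L^j(r)$ in the tree (it is either $L^k(r)$ for some $k\ge j$ or sits in the subtree rooted at $\rightChild{L^k(r)}$ for some $k>j$), so $L^j(r)$ is an ancestor of $q'$. Combining this with the bound $q\ge q_{i-1}$ enforced in the search via $r_{\sf left}$ immediately gives that both $p$ and $q$ are ancestors of $q_{i-1}$ whenever $q_{i-1}$ is a descendant of $r$. That is exactly how the paper dispatches the case you flag; your sketch, lacking the spine observation and leaning on an incorrect (iii), cannot get there.
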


\begin{proof}
We will prove that the invariant holds for every $j$-th iteration of the algorithm using induction.

First, before the first iteration of the algorithm, the property trivially holds because $D$ contains only $(\frac{0}{1}, \gamma(\frac{0}{1}))$ ($|D| = 1$).

Assume that the property holds for the $j$-th iteration of the algorithm. We will prove that the property holds for the $(j+1)$-th iteration of the algorithm.

Let $D$ be the following in the $j$-th iteration of the algorithm:

\[
D = (q_1,A_1), \ldots, (q_{i-1}, A_{i-1}), (q_i, A_i), (q_{i+1}, A_{i+1}), \ldots (q_{m}, A_m)
\]

In Algorithm~\ref{alg:learning-algo}, at the beginning of the iteration, in Line~\ref{in-alg:ask-equivalence}, we receive a counterexample $q^{*}$. Let $q_i$ be the closest ancestor of $q^{*}$ which we obtained in Line~\ref{in-alg:find-closest-ancestor}. Cause Algorithm~\ref{alg:find-break-link}, called in Line~\ref{in-alg:find-break-link}, find a break link $(p,q)$ in the left or right-branch of $q_i$, then $p$ and $q$ must be descendants of $q_i$.

Now, we will prove that the invariant holds for every iteration of the algorithm:

\begin{itemize}
    \item If $q^{*} < q_i$: the result array $D$ will have the following elements:
    \[
    (q_{i-1}, A_{i-1}), (q, \gamma(q)), (p, \gamma(p)), (q_i, A_i)
    \]
    where $\searchLeftName$ returns a break link $(p,q)$ such that $q_{i-1} \leq q < p \leq q_i$.

    By the induction hypothesis:
    
    \begin{itemize}
        \item If $q_{i-1}$ was a descendant of $q_i$: $p$ and $q$ are in the left-branch of $q_i$ and $q \geq q_{i-1}, p > q_{i-1}$ then $p$ and $q$ must be ascendants of $q_{i-1}$.

        \item If $q_{i-1}$ was an ascendant of $q_i$: $p$ and $q$ are in the left-branch of $q_i$ so $p$ and $q$ are descendants of $q_{i}$, by the transitivity property, then $p$ and $q$ must be descendants of $q_{i-1}$.
    \end{itemize}

    $p$ is an ascendant of $q$, because they form part of an edge in $\SBtree$.

    Finally, $q$ is an ascendant or descendant of $q_{i-1}$ in any case, $p$ is an ascendant of $q$ and $p$ is a descendant of $q_i$. Therefore, the property holds.

    \item If $q^{*} > q_i$: the result array $D$ will have the following elements:
    \[
    (q_i, A_i), (p, \gamma(p)), (q, \gamma(q)), (q_{i+1}, A_{i+1})
    \]
    where $\searchRightName$ returns a break link $(p,q)$ such that $q_{i-1} \leq q < p \leq q_i$.

    By the induction hypothesis:
    
    \begin{itemize}
        \item If $q_{i+1}$ was a descendant of $q_i$: $p$ and $q$ are in the right-branch of $q_i$ and $q \leq q_{i+1}, p < q_{i+1}$ then $p$ and $q$ must be ascendants of $q_{i+1}$.

        \item If $q_{i+1}$ was an ascendant of $q_i$: $p$ and $q$ are in the right-branch of $q_i$ so $p$ and $q$ are descendants of $q_{i}$, by the transitivity property, then $p$ and $q$ must be descendants of $q_{i+1}$.
    \end{itemize}

    $p$ is an ascendant of $q$, because they form part of an edge in $\SBtree$.

    Finally, $q$ is an ascendant or descendant of $q_{i+1}$ in any case, $p$ is an ascendant of $q$ and $p$ is a descendant of $q_i$. Therefore, the property holds.

\end{itemize}

We proved by induction that the property holds for every iteration $j$-th in Algorithm~\ref{alg:learning-algo}.
\end{proof}

\subsection{Proof of Theorem~\ref{theo:hip-construction}}

First, we will prove that Algorithm~\ref{alg:hip-constructions} always outputs a consistent representation $\rep$ of $D$. We will prove that every element $(q,b)$ in $D$ holds $\gamma(q) = \rep(q)$.

We will divide the proof into two parts. First, we will prove the property for every $q$ that is left or right-bound in $\rep$. Second, we will prove the property for the rest of elements in $D$.

First, Let $q$ be any left or right-bound of $\rep$. In the construction of $\rep$ using Algorithm~\ref{alg:hip-constructions}, $q$ must have satisfied conditions on Lines 5 or 8. If Line~5 was fulfilled, the element $([q, p \rangle, \gamma(q))$ will belong to $\rep$, then $\rep(q) = \gamma(q)$. If Line~8 was fulfilled, the element $(\langle p, q], \gamma(q))$ will belong to $\rep$, then $\rep(q) = \gamma(q)$.

Second, let $(\langle p, r \rangle, A)$ be the interval in $\rep$ that $q$ belongs to where $\langle \in \{(,[\}$, $\rangle \in \{),]\}$ and $p \leq r$. Let the number of elements of $D$ between $p$ and $q$ be of size $m$.

We will prove it considering the four types of intervals:

\begin{itemize}
    \item $[p, p]$: Cause the interval is a singleton and $p$ is the unique endpoint, we proved before that $\gamma(p) = \rep(p).$

    For the next tree cases, assume that $D$ contains at least two elements in the interval. If only one or none exists, then it holds trivially by the endpoint argument. By Lemma~1, every element in $D$ has an ascendant or descendant relationship with the elements next to it.
    Let the elements in $D$ that belong to the interval $(p,r)$ be of the form: $p_1, \ldots, p_m$. We will prove by induction on $m$:
    
    \item $[p, r)$: For the base case ($m=1$): By contradiction, assume that $\gamma(p_1) \neq \gamma(p)$, then: ($1$) If $p_1$ is a descendant of $p$ then, by Line~9 in $\constructRepresentationName$, $p$ would have been right-bound of an interval in $\rep$. ($2$) If $p$ is a descendant of $p_1$, then the interval $[p, r)$ would have been partitioned exactly on $p_1$ with $p_1 < r$. Then $[p, r)$ could not have been in $\rep$. Therefore, $p_1$ must hold that $\gamma(p_1) = \gamma(p)$. We assume the property holds for $m=i$. For $m=i+1$: By contradiction, assume that $\gamma(p_i) \neq \gamma(p_{i+1})$, then: If $p_{i+1}$ is a descendant of $p_i$ or $p_i$ is a descendant of $p_{i
    +1}$ then, the interval $[p, r)$ would have been partitioned exactly on $p_i$ or $p_{i+1}$ with $p_i, p_{i+1} < r$. Then $[p, r)$ could not have been in $\rep$. Therefore, $p_{i+1}$ must hold that $\gamma(p_{i+1}) = \gamma(p_i)$. Finally, by induction, the property holds for every element in $D$ that belongs to $(p,r)$.

    \item $(p, r]$: For the base case ($m=1$): By contradiction, assume that $\gamma(p_1) \neq \gamma(r)$, then: ($1$) If $p_1$ is a descendant of $r$ then, by Line~6 in $\constructRepresentationName$, $r$ would have been left-bound of an interval in $\rep$. ($2$) If $r$ is a descendant of $p_1$, then the interval $(p, r]$ would have been partitioned exactly on $p_1$ with $p_1 < r$. Then $(p, r]$ could not have been in $\rep$. Therefore, $p_1$ must hold that $\gamma(p_1) = \gamma(r)$. We assume the property holds for $m=i$. For $m=i+1$: By contradiction, assume that $\gamma(p_i) \neq \gamma(p_{i+1})$, then: If $p_{i+1}$ is a descendant of $p_i$ or $p_i$ is a descendant of $p_{i
    +1}$ then, the interval $(p, r]$ would have been partitioned exactly on $p_i$ or $p_{i+1}$ with $p_i, p_{i+1} < r$. Then $(p, r]$ could not have been in $\rep$. Therefore, $p_{i+1}$ must hold that $\gamma(p_{i+1}) = \gamma(p_i)$. Finally, by induction, the property holds for every element in $D$ that belongs to $(p,r)$.

    \item $(p, r)$: By the construction of $\rep$, $p_1$ determined the evaluation of the interval in $\rep$. For the base case ($m=1$): it holds the property trivially. We assume the property holds for $m=i$. For $m=i+1$: By contradiction, assume that $\gamma(p_i) \neq \gamma(p_{i+1})$, then: If $p_{i+1}$ is a descendant of $p_i$ or $p_i$ is a descendant of $p_{i
    +1}$ then, the interval $(p, r)$ would have been partitioned exactly on $p_i$ or $p_{i+1}$ with $p_i, p_{i+1} < r$. Then $(p, r)$ could not have been in $\rep$. Therefore, $p_{i+1}$ must hold that $\gamma(p_{i+1}) = \gamma(p_i)$. Finally, by induction, the property holds for every element in $D$ that belongs to $(p,r)$.
\end{itemize}

Therefore, Algorithm~\ref{alg:hip-constructions} always output a consistent representation $\rep$ of $D$.

Now, we want to prove that if $D$ contains all break links of~$\gamma$, then $\rep = \gamma$.

If $D$ contains all break links of~$\gamma$, then for every edge $(p,q) \in \SBtree$ such that $p,q \notin D$, then $\unorderInt{p,q}$ must be monochromatic. As a result, every interval in $\rep$ is monochromatic on~$\gamma$ and, by Proposition~\ref{prop:break-links-convergent-equivalence} we have all the convergent fractions of $\bounds{\gamma}$, then $\gamma = \rep$.

\subsection{Proof of Proposition~\ref{prop:find-breaklink}}
By the definition of the equivalence oracle in Section~3, it holds that $\gamma(q^{*}) = \rep(q^{*})$.

Let $D$ be the following in the current iteration of Algorithm~\ref{alg:learning-algo}:
\[
D = (q_1,A_1, \ldots, (q_{i-1}, A_{i-1})), (q_i, A_i), (q_{i+1}, A_{i+1}), \ldots (q_{m}, A_m)
\]

We will prove that if $q^{*} < r$ there exists a break link $(p,q)$ in the left-branch of $r$ where $(q, \gamma(q)) \notin D$ and if $q^{*} > r$ there exists a break link $(p,q)$ in the right-branch of $r$ where $(q, \gamma(q)) \notin D$.

First, if $q^{*} < r$ we will prove that there exists a break link $(p,q)$ in the left-branch of $r$ where $(q, \gamma(q)) \notin D$. We will prove it by cases:

\begin{itemize}
    \item If $q_{i-1}$ is not a descendant of $q_i$. Cause $q_j < q_{i-1}. \forall 1 \leq j < i-1$ by the construction of $D$, then no break link has been discovered in the left-subtree of $q_i$ until this iteration of the Algorithm. In this case, $\rep((\lbound{q_i}, q_i)) = A$ for some label $A$. Because, $\gamma(q^{*}) \neq \rep(q^{*})$, then in the left-subtree of $q_i$ it must exist a bound that partitions $(\lbound{q_i}, q_i)$ so a break link must exist in the left-branch of $q_i$.

    \item If $q_{i-1}$ is a descendant of $q_i$. It holds that $\rbound{q_{i-1}} < q^{*} < q_{i}$, if not $q_{i-1}$ would be the closest ancestor of $q_{*}$. In this case, $\rep([\rbound{q_{i-1}}, q_i)) = A$ for some label $A$. Because, $\gamma(q^{*}) \neq \rep(q^{*})$, then in the left-subtree of $q_i$ it must exist a bound that partitions $[\rbound{q_{i-1}}, q_i)$ so a break link must exist in the left-branch of $q_i$ in $[\rbound{q_{i-1}}, q_i)$.
\end{itemize}

Second, if $q^{*} > r$ we will prove that there exists a break link $(p,q)$ in the right-branch of $r$ where $(q, \gamma(q)) \notin D$. We will prove it by cases:

\begin{itemize}
    \item If $q_{i+1}$ is not a descendant of $q_i$. Cause $q_j > q_{i+1}. \forall i+1 < j \leq m$ by the construction of $D$, then no break link has been discovered in the right-subtree of $q_i$ until this iteration of the Algorithm. In this case, $\rep((q_i, \rbound{q_i})) = A$ for some label $A$. Because, $\gamma(q^{*}) \neq \rep(q^{*})$, then in the right-subtree of $q_i$ it must exist a bound that partitions $(q_i, \rbound{q_i})$ so a break link must exist in the right-branch of $q_i$.

    \item If $q_{i+1}$ is a descendant of $q_i$. It holds that $q_i < q^{*} < \lbound{q_{i+1}}$, if not $q_{i+1}$ would be the closest ancestor of $q_{*}$. In this case, $\rep((q_i, \lbound{q_{i+1}}]) = A$ for some label $A$. Because, $\gamma(q^{*}) \neq \rep(q^{*})$, then in the right-subtree of $q_i$ it must exist a bound that partitions $(q_i, \lbound{q_{i+1}}]$ so a break link must exist in the right-branch of $q_i$ in $(q_i, \lbound{q_{i+1}}]$.
\end{itemize}

\subsection{Proof of Theorem~\ref{theo:find-breaklink}}
Let $D$ be the following in the current iteration of Algorithm~\ref{alg:learning-algo}:
\[
D = (q_1,A_1), \ldots, (q_{i-1}, A_{i-1}), (q_i, A_i), (q_{i+1}, A_{i+1}), \ldots ,(q_{m}, A_m)
\]

Given a counterexample $q^{*}$ and $r$ the closest ancestor of $q^{*}$ in $D$. If $q^{*} > r$, then Algorithm~$\searchRightName$ is called. We will prove that it returns a break link $(p,q)$ not yet present in $D$. Let $q_i$ be equal to $r$ in $D$. Note that $q_i < q^* < \rbound{q_i}$.

There are four cases, we will prove that $\searchRightName$ finds an undiscovered break link for each case separately:

\begin{itemize}
    \item {\em (1) $q_{i+1}$ is an ancestor of $q_i$ and $A_{i+1} = A_i$}
    
    In this case, there does not exist any yet-discovered break link in the right-branch of $q_i$ and $\rep([q_i, q_{i+1}]) = A_i$ as $\rep$ is consistent with $D$ as demonstrated in Theorem~\ref{theo:hip-construction}. Therefore, $\gamma(q^{*}) \neq A_i$ as $q^{*} \in (q_i, q_{i+1})$ and $\gamma(q^{*}) \neq \rep(q^{*})$ by the definition of the Equivalence Oracle.

    Suppose we are at the $k$-th iteration (starting with the 0-th one) on the loop in Algorithm~$\searchRightName$. We denote by $e_k$ the node computed in Line~\ref{searchRight:compute-e}.
    
Since $e_k$ can become arbitrarily close to $\rbound{q_i}$, then at some point it must hold that $e_k > q^*$. Since $\gamma(q^{*}) \neq A_i$, then by the condition in Line~\ref{searchRight:cond2}, we deduce that the loop always stops.

    If condition in Line~\ref{searchRight:cond1} is at some point fulfilled, then $\gamma(e_{k-1}) \neq \gamma(e_k)$ for some $k$, then there is an undiscovered break link between these nodes.
    Otherwise, and if instead it is the one on Line~\ref{searchRight:cond2} that is fulfilled, then that means that for some $k$, it holds that $\gamma(e_{k-1}) = \gamma(e_{k})$ and $q^*\in (e_{k-1}, e_k)$ has a different color. Therefore, there is an undiscovered break link between $e_{k-1}$ and $e_k$.
    The condition in Line~\ref{searchRight:cond3} is never  fulfilled because $e_k < q_{i+1}$ for every $k$.

    \item {\em (2) $q_{i+1}$ is an ancestor of $q_i$ and $A_{i+1} \neq A_i$}
    
    In this case, there does not exist any yet-discovered break link in the right-branch of $q_i$ and $\rep([q_i, q_{i+1})) = A_i$ as $\rep$ is consistent with $D$ as demonstrated in Theorem~\ref{theo:hip-construction} and how $\rep$ is constructed by $\constructRepresentationName$. Therefore, $\gamma(q^{*}) \neq A_i$ as $q^{*} \in (q_i, q_{i+1})$ and $\gamma(q^{*}) \neq \rep(q^{*})$ by the definition of the Equivalence Oracle. The rest of the proof for this case is analogous to item (1) above.

    \item {\em (3) $q_{i+1}$ is a descendant of $q_i$ and $A_{i+1} = A_i$}
    
    In this case, $\rep([q_i, q_{i+1}]) = A_i$ as $\rep$ is consistent with $D$ as demonstrated in Theorem~\ref{theo:hip-construction} and how $\rep$ is constructed by $\constructRepresentationName$. Therefore, $\gamma(q^{*}) \neq A_i$ as $q^{*} \in (q_i, q_{i+1})$ and $\gamma(q^{*}) \neq \rep(q^{*})$ by the definition of the Equivalence Oracle.

    Suppose we are at the $k$-th iteration (starting with the 0-th one) on the loop in Algorithm~$\searchRightName$, we denote by $e_k$ the node reached at iteration $k$.
    
    Since $e_k$ can become arbitrarily close to $\rbound{q_i}$, then at some point it must hold that $e_k > q^*$. Since $\gamma(q^{*}) \neq A_i$, then by the condition in Line~\ref{searchRight:cond2}, we deduce that the loop always stops.
    
    If condition in Line~\ref{searchRight:cond1} is at some point fulfilled, then $\gamma(e_{k-1}) \neq \gamma(e_k)$ for some $k$, then there is an undiscovered break link between these nodes.
    Otherwise, and if instead it is the one on Line~\ref{searchRight:cond2} that is fulfilled, then that means that for some $k$, it holds that $\gamma(e_{k-1}) = \gamma(e_{k})$ and $q^*\in (e_{k-1}, e_k)$ has a different color. Therefore, there is an undiscovered break link between $e_{k-1}$ and $e_k$.
    
    If the condition on Line~\ref{searchRight:cond3} is fulfilled before the one on Line~\ref{searchRight:cond2}, and since $\gamma(q^{*}) \neq A_i$, then this implies that $q^* \leq q_{i+1}$ which contradicts the fact that $q_i$ is the closest ancestor to $q^*$.

    \item {\em (4) $q_{i+1}$ is a descendant of $q_i$ and $A_{i+1} \neq A_i$}
    
    In this case, $\rep((q_i, q_{i+1}]) = A_{i+1}$ as $\rep$ is consistent with $D$ as demonstrated in Theorem~\ref{theo:hip-construction} and how $\rep$ is constructed by $\constructRepresentationName$. Therefore, $\gamma(q^{*}) \neq A_{i+1}$ as $q^{*} \in (q_i, q_{i+1})$ and $\gamma(q^{*}) \neq \rep(q^{*})$ by the definition of the Equivalence Oracle. Note that in this case, it could happen that $\gamma(q^{*}) = A_i$.

    Suppose we are at the $k$-th iteration (starting with the 0-th one) on the loop in Algorithm~$\searchRightName$, we denote by $e_k$ the node reached at iteration $k$.
    
    We have that $q_{i+1} < \rbound{q_i}$, and since $e_k$ can become arbitrarily close to $\rbound{q_i}$, by the condition in Line~\ref{searchRight:cond3}, we deduce that the loop always stops.

    If condition in Line~9 is fulfilled, then cause $\gamma(e_{k-1}) \neq \gamma(e_k)$ a break link must exist between these two nodes. 
    Otherwise, and if instead it is the one on Line~\ref{searchRight:cond2} that is fulfilled, then that means that for some $k$, it holds that $\gamma(e_{k-1}) = \gamma(e_{k})$ and $q^*\in (e_{k-1}, e_k)$ has a different color. Therefore, there is an undiscovered break link between $e_{k-1}$ and $e_k$.
    On the other hand, if it is the condition on Line~\ref{searchRight:cond3} that is fulfilled, then that means that for some $k$, it holds that $\gamma(e_{k-1}) = \gamma(e_{k})$ and $q_{i+1} \in (e_{k-1}, e_k)$ has a different color. Similarly, there is an undiscovered break link between $e_{k-1}$ and $e_k$.
    
\end{itemize}

Now, to prove that $\bsRightName$ returns an undiscovered break link in the sub-right-branch of $q_i$. We will have as invariant that at each iteration, there is an undiscovered break link between nodes $q_{\sf lo} = \frac{a + {\sf lo}\times c}{b + {\sf lo}\times d}$ and $q_{\sf hi} = \frac{a + {\sf hi}\times c}{b + {\sf hi}\times d}$. As a base case, we note that before entering the loop, these nodes correspond to nodes $e_{k-1}$ and $e_k$ described above, which satisfy the invariant. 
Note that this immediately implies that if $q_{\sf hi}$ is ever child of $q_{\sf lo}$ then the edge $(q_{\sf lo}, q_{\sf hi})$ is a break link, which is what is checked in Line~\ref{bsRight:cond1}. 
Similarly, in Lines~\ref{bsRight:cond3} and~\ref{bsRight:cond4}, the conditions describe that the edge is a break link.

If condition in Line~\ref{bsRight:cond5} is fulfilled, then $\gamma(e_{k-1}) \neq \gamma(p)$, therefore between $p$ and $e_{k}$ a break link must exist. In other case, then $\gamma(e_{k-1}) = \gamma(p)$, then $q^{*}$ determines if the break link is in the top part of the array or lower part. As the array get divided conserving the break link, by the induction hypothesis, $\bsRightName$ returns a break link $(p,q)$ in the right-branch of $q_i$ between $q_i$ and $q_{i+1}$.

The fact that this break link is so-far undiscovered is guaranteed by the fact that these nodes are in the right branch of $q_i$; if $q_{i+1}$ is ancestor of $q_i$, then they are strictly less than $q_i$; if $q_{i+1}$ is descendant of $q_i$, then the only way this break link is already seen is if at the end of the binary search, then $q_i = q_{\sf lo}$ and $q_{i+1} = q_{\sf hi}$ and this means that $q_{i+1}$ is child of $q_i$, and since we have that $q_i < q^* < q_{i+1}$, then this implies that $q^*$ is a descendant of $q_{i+1}$, which contradicts the fact that $q_i$ is the closest ancestor to $q^*$.

For Algorithm~$\searchLeftName$, the reasoning is fully analogous.

Regardless $\searchRightName$ or $\searchLeftName$ is called, the loop defined in Line~5 in either algorithm is executed exactly $\lceil \log(k) \rceil$ times, since: $2^{\lceil \log(k) \rceil} \geq k$. The first procedure of the algorithm takes $\bigo(\log(k))$ time. Because, binary search takes $\bigo(n)$ time, where $n$ is the size of the array, then the interval searched is $2^{\log(k)} - 2^{\log(k-1)} = 2^{\log(k-1)}$. The second procedure then takes $\bigo(\log(2^{\log(k-1)})) = \bigo(\log(k))$.

\subsection{Proof of Theorem~\ref{theo:correctness-learning-algo}}
Algorithm~\ref{alg:learning-algo} returns a representation $\rep$ for a target function $\gamma \in \cC$ using no more than $\bigo(\size{\gamma})$ membership queries and no more than $\bigo(\size{\gamma})$ equivalence queries.

{\em Proof.}
As Algorithm~\ref{alg:learning-algo}, at every iteration, discovers a new break link and store in $D$ and the number of break links is finite by Proposition~\ref{prop:break-links-convergent-equivalence}. At some iteration, $D$ will contain every break link of~$\gamma$ and Algorithm~\ref{alg:hip-constructions} will return $\rep$ such that $\rep=\gamma$ as demonstrated in Theorem~\ref{theo:hip-construction}.

In Section~\ref{sec:algorithm} we analyze the number of membership queries and equivalence queries needed to learn an unknown finite piecewise function $\gamma$ using Algorithm~\ref{alg:learning-algo}. As proved, the Algorithm~\ref{alg:learning-algo} will use no more than $\bigo(\size{\gamma})$ membership queries and no more than $\bigo(\size{\gamma})$ equivalence queries.

---

\begin{algorithm}[H]
    \caption{$\searchRightName$: Return a break link on the right-branch of a node $r$}
    \label{alg:search-right}
    \begin{algorithmic}[1]
    \REQUIRE The counterexample $q^{*} \in \bbQ$, the closest ancestor $r$ of $q^{*}$ in $D$ and array $D$\\
    \ENSURE A break link $(p,q)$ in the right-branch that stems from $r$ \vspace{1mm}
	\STATE {\bf Procedure}  $\searchRight{r}{q^{*}}{D}$
	\STATE {\bf assume} $D = (q_1, A_1), (q_2, A_2), \ldots, (q_m, A_m)$ {\bf with } $r = q_i$ \COMMENT{Note: $\gamma(r) = A_i$}
    \STATE $\tfrac{c}{d} \gets \rbound{r}$
    \STATE $r_{\sf right} \gets q_{i+1}$ {\bf if } $i < m$ {\bf otherwise} $r_{\sf right} \gets +\infty$ 
    \STATE $k \gets 0$
    \LOOP
        \STATE $e \gets \tfrac{a+2^{k}\cdot c}{b + 2^{k}\cdot d}$ \label{searchRight:compute-e}\vspace{1mm}
        \STATE $eLabel \gets \membOracle{e}$
        \IF {$A_i \neq \membOracle{e}$} \label{searchRight:cond1}
            \STATE \textbf{return} $\bsRight{r}{q^{*}}{k}{r_{\sf right}}$
        \ELSIF {($\membOracle{q^{*}} \neq A_i$) {\bf and} ($e > q^{*}$)} \label{searchRight:cond2}
            \STATE \textbf{return} $\bsRight{r}{q^{*}}{k}{r_{\sf right}}$ 
        \ELSIF{$e > r_{\sf right}$} \label{searchRight:cond3}
            \STATE \textbf{return} $\bsRight{r}{q^{*}}{k}{r_{\sf right}}$
        \ENDIF
        \STATE $k \gets k +1$
    \ENDLOOP
    \end{algorithmic}
\end{algorithm}

\begin{algorithm}[H]
    \caption{$\bsRightName$: Return a break link in the bounded right-branch of a node $r$}
    \label{alg:binary-search-right}
    \begin{algorithmic}[1]
    \REQUIRE The counterexample $q^{*} \in \bbQ$, the closest ancestor $r = \frac{a}{b}$, the integer $k \in \bbN$ of the number of exponential jumps and the next break link after $r$ denoted by $r_{\sf right}$. \\
    \ENSURE A break link $(p,q)$ in the right-branch of $r$. \\
    \STATE {\bf Procedure}  $\bsRight{r}{q^{*}}{k}{r_{\sf right}}$
    \IF{k = 0}
    	\STATE {\bf return} $(r,\rightChild{r})$
    \ENDIF
    \STATE $\frac{c}{d} \gets \rbound{r}$
    \STATE ${\sf lo} \gets 2^{k-1}$, ${\sf hi} \gets 2^{k}$
    \STATE $rLabel \gets \membOracle{r}$
    \LOOP
        \STATE ${\sf mid} \gets \lceil \frac{{\sf lo}+{\sf hi}}{2} \rceil$ \vspace{1mm}
        \STATE $q \gets \frac{a + {\sf mid}\times c}{b + {\sf mid}\times d}$ \vspace{1mm},  $qLabel \gets \membOracle{q}$\vspace{1mm}
        \STATE $p \gets \parent{q}$\vspace{1mm},  $pLabel \gets \membOracle{p}$
        \IF {${\sf hi} - {\sf lo} \leq 2$}  \label{bsRight:cond1}

            \STATE \textbf{return} $(p,q)$
        \ELSIF {$q \leq r_{\sf right}$ }  \label{bsRight:cond2}

            \STATE ${\sf hi} \gets {\sf mid} - 1$
        \ELSE
            \IF {$pLabel \neq qLabel$} \label{bsRight:cond3}
                \STATE \textbf{return} $(p,q)$
            \ENDIF
            \IF {$rLabel = qLabel$ {\bf and} $p < q^{*} < q$ {\bf and} $\membOracle{q^{*}} \neq rLabel$} \label{bsRight:cond4}

                \STATE \textbf{return} $(p,q)$
            \ENDIF
            \IF {$rLabel \neq pLabel$ } \label{bsRight:cond5}

                \STATE ${\sf hi} \gets {\sf mid} - 1$
            \ENDIF
            \IF {$\membOracle{q^{*}} \neq rLabel$} \label{bsRight:cond6}

                \IF {$q^{*} > p$ }  \label{bsRight:cond7}

                    \STATE ${\sf lo} \gets {\sf mid} + 1$
                \ELSE
                    \STATE ${\sf hi} \gets {\sf mid} - 1$
                \ENDIF
            \ELSE
                \STATE ${\sf lo} \gets {\sf mid} + 1$
            \ENDIF
        \ENDIF
    \ENDLOOP
    \end{algorithmic}
\end{algorithm}

\begin{algorithm}[H]
    \caption{$\searchLeftName$: Return a break link on the left branch of a node $r$}
    \label{alg:left-right}
    \begin{algorithmic}[1]
    \REQUIRE The counterexample $q^{*} \in \bbQ$, the closest ancestor $r$ of $q^{*}$ in $D$ and array $D$\\
    \ENSURE A break link $(p,q)$ in the left-branch that stems from $r$ \vspace{1mm}
    \STATE {\bf Procedure}  $\searchLeft{r}{q^{*}}{D}$
    	\STATE {\bf assume} $D = (q_1, A_1), (q_2, A_2), \ldots, (q_m, A_m)$ {\bf with } $r = q_i$ \COMMENT{Note: $\gamma(r) = A_i$}
    \STATE $\tfrac{c}{d} \gets \lbound{r}$
    \STATE $r_{\sf left} \gets q_{i-1}$  {\bf if } $i > 1$ {\bf otherwise} $r_{\sf left} \gets -\infty$ 
    \STATE $k \gets 0$
    \LOOP
        \STATE $e \gets \tfrac{a+2^{k}\cdot c}{b + 2^{k}\cdot d}$ \vspace{1mm}
        \STATE $eLabel \gets \membOracle{e}$
        \IF {$\membOracle{e} = A_i$}
            \STATE \textbf{return} $\bsLeft{r}{q^{*}}{k}{r_{\sf left}}$
        \ENDIF
        \IF {$\membOracle{q^{*}} \neq A_i$ {\bf and} $e < q^{*}$}
            \STATE \textbf{return} $\bsLeft{r}{q^{*}}{k}{r_{\sf left}}$
        \ELSIF{$e < r_{\sf left}$}
            \STATE \textbf{return} $\bsLeft{r}{q^{*}}{k}{r_{\sf left}}$
        \ENDIF
        \STATE $k \gets k +1$
    \ENDLOOP
    \end{algorithmic}
\end{algorithm}

\begin{algorithm}[H]
    \caption{$\bsLeftName$: Return a break link in a sub-branch of $\SBtree$}
    \label{alg:binary-search-left}
    \begin{algorithmic}[1]
    \REQUIRE The counterexample $q^{*} \in \bbQ$, the closest ancestor $r = \frac{a}{b}$, the integer $k \in \bbN$ of the number of exponential jumps and the next break link before $r$ denoted by $r_{\sf left}$. \\
    \ENSURE A break link $(p,q)$ in the left-branch of $r$. \\
    \STATE {\bf Procedure}  $\bsLeft{r}{q^{*}}{k}{r_{\sf left}}$
    \IF{k = 0}
    	\STATE {\bf return} $(r,\leftChild{r})$
    \ENDIF
    \STATE $\tfrac{c}{d} \gets \lbound{r}$
    \STATE ${\sf lo} \gets 2^{k-1}$, ${\sf hi} \gets 2^{k}$
    \STATE $rLabel \gets \membOracle{r}$
    \LOOP
        \STATE ${\sf mid} \gets \lceil \frac{{\sf lo}+{\sf hi}}{2} \rceil$ \vspace{1mm}
        \STATE $q \gets \frac{a + {\sf mid}\times c}{b + {\sf mid}\times d}$ \vspace{1mm}, $qLabel \gets \membOracle{q}$
        \STATE $p \gets \parent{q}$,  $pLabel \gets \membOracle{p}$
        \IF {${\sf hi} - {\sf lo} \leq 2$}
            \STATE \textbf{return} $(p,q)$
        \ENDIF
        \IF {$q \geq r_{\sf left}$}
            \STATE ${\sf hi} \gets {\sf mid} - 1$
        \ELSE
            \IF {$pLabel \neq qLabel$}
                \STATE \textbf{return} $(p,q)$
            \ENDIF
            \IF {$rLabel = qLabel$ {\bf and} $q < q^{*} < p$ {\bf and} $\membOracle{q^{*}} \neq rLabel$}
                \STATE \textbf{return} $(p,q)$
            \ENDIF
            \IF {$rLabel \neq pLabel$ }
                \STATE ${\sf hi} \gets {\sf mid} - 1$
            \ENDIF
            \IF {$\membOracle{q^{*}} \neq rLabel$}
                \IF {$q^{*} > p$ }
                    \STATE ${\sf hi} \gets {\sf mid} - 1$
                \ELSE
                    \STATE ${\sf lo} \gets {\sf mid} + 1$
                \ENDIF
            \ELSE
                \STATE ${\sf lo} \gets {\sf mid} + 1$
            \ENDIF
        \ENDIF
    \ENDLOOP
    \end{algorithmic}
\end{algorithm}

\end{document}